\documentclass[twoside,11pt]{article}

\usepackage{blindtext}

%

%
%
%

\usepackage[abbrvbib]{jmlr2e}

\usepackage{amsmath,amsfonts}
\usepackage{mathtools}
\mathtoolsset{showonlyrefs}
\usepackage{subcaption}
\usepackage{algorithm}
\usepackage{algpseudocode}
\usepackage{bbm}
\usepackage{xcolor}


\DeclareMathOperator*{\argmin}{arg\,min}

\newcommand{\R}{\mathbb{R}}

\newcommand{\E}{\mathbb{E}}

\newcommand\dx{\,\mathrm{d}}
\newcommand\tT{\mathrm{T}}
\newcommand\ELBO{\mathrm{ELBO}}

\makeatletter
\renewcommand{\paragraph}{%
  \@startsection{paragraph}{4}%
  {\z@}{3.25ex \@plus 1ex \@minus .2ex}{-1em}%
  {\normalfont\normalsize\itshape}%
}
\makeatother


\usepackage{lastpage}
\jmlrheading{25}{2024}{1-\pageref{LastPage}}{3/23; Revised
2/24}{6/24}{23-0396}{Giovanni S. Alberti, Johannes Hertrich, Matteo Santacesaria and Silvia Sciutto}


\ShortHeadings{Manifold Learning by Mixture Models of VAEs}{Alberti, Hertrich, Santacesaria and Sciutto}
\firstpageno{1}

\begin{document}

\title{Manifold Learning by Mixture Models of VAEs\\ for Inverse Problems}

\author{\name Giovanni S. Alberti \email giovanni.alberti@unige.it \\
       \addr MaLGa Center\\ 
       Department of Mathematics, Department of Excellence 2023--2027\\
       University of Genoa, Italy
       \AND
       \name Johannes Hertrich \email j.hertrich@ucl.ac.uk \\
       \addr Department of Computer Science\\
       University College London,\\
       London, United Kingdom
       \AND
       \name Matteo Santacesaria \email matteo.santacesaria@unige.it \\
       \addr MaLGa Center\\ 
       Department of Mathematics, Department of Excellence 2023--2027\\
       University of Genoa, Italy
       \AND
       \name Silvia Sciutto \email silvia.sciutto@edu.unige.it \\
       \addr MaLGa Center\\ 
       Department of Mathematics, Department of Excellence 2023--2027\\
       University of Genoa, Italy
       }

\editor{Amos Storkey}

\maketitle

\begin{abstract}
Representing a manifold of very high-dimensional data with generative models has been shown to be computationally efficient in practice.
However, this requires that the data manifold admits a global parameterization.
In order to represent manifolds of arbitrary topology, we propose to learn a mixture model of variational autoencoders.
Here, every encoder-decoder pair represents one chart of a manifold.
We propose a loss function for maximum likelihood estimation of the model weights and choose an architecture that provides
us the analytical expression of the charts and of their inverses.
Once the manifold is learned, we use it for solving inverse problems by minimizing a data fidelity term restricted to the learned manifold.
To solve the arising minimization problem we propose a Riemannian gradient descent algorithm on the learned manifold.
We demonstrate the performance of our method for low-dimensional toy examples as well as for deblurring and 
electrical impedance tomography on certain image manifolds.
\end{abstract}

\begin{keywords}
  manifold learning, mixture models, variational autoencoders, Riemannian optimization, inverse problems
\end{keywords}

\section{Introduction}

\paragraph{Manifold learning.}

The treatment of high-dimensional data is often computationally costly and numerically unstable.
Therefore, in many applications, it is important to find a low-dimensional representation of high-dimensional data sets.
Classical methods, like the principal component analysis (PCA, \citealp{P1901}),  assume that the data is contained in a low-dimensional subspace.
However, for complex data sets this assumption appears to be too restrictive, particularly when working with image data sets. 
Therefore, recent methods rely on the so-called manifold hypothesis \citep{BCV2013}, stating that even complex and high-dimensional data sets are 
contained in a low-dimensional manifold.
Based on this hypothesis, in recent years, many successful approaches have been based on generative models, able to represent high dimensional data in $\R^n$ by a generator $D\colon\R^d\to\R^n$ with $d\ll n$: these include generative adversarial networks (GANs, \citealp{GPMX2014}),  
variational autoencoders (VAEs, \citealp{KW2013}), injective flows \citep{KKHD2021} and score-based diffusion models \citep{song2019generative,ho2020denoising}. For a survey on older approaches to manifold learning, the reader is referred to \citet{ma2011manifold,2012-izenman} and to the references therein.

\paragraph{Learning manifolds with multiple charts.} Under the assumption that $D$ is injective, the set of generated points $\{D(z):z\in\R^d\}$ forms a manifold that approximates the training set.
However, this requires that the data manifold admits a global parameterization. 
In particular, it must not be disconnected or contain holes.
In order to model disconnected manifolds, \citet{FZWN2021,JZTT2017,KRRA2022,PL2018} propose to model the latent space of a VAE by a Gaussian mixture model.
This enables the authors to capture multimodal probability distributions.
However, this approach struggles with modelling manifolds with holes since either the injectivity of the generator is violated or it is impossible to model overlapping charts.
Similarly, \citet{DFDKT2018,MLMTT2019,RMP2020} propose latent distributions defined on Riemannian manifolds for representing general topologies. \citet{2022-massa-garbarino-benvenuto} embed the manifold into a higher-dimensional space, in the spirit of Whitney embedding theorem.
However, these approaches have the drawback that the topology of the manifold has to be known a priori, which is usually not the case in practice.

Here, we focus on the representation of the data manifold by several charts.
A chart provides a parameterization of an open subset from the manifold by defining a mapping from the manifold into a Euclidean space.
Then, the manifold is represented by the collection of all of these charts, which is called atlas.
For finding these charts, \citet{CDJ2021,FG2021,PRA2013,SDJBS2022} propose the use of clustering algorithms.
By default, these methods do not provide an explicit formulation of the resulting charts.
As a remedy, \citet{B2002,PRA2013} use linear or kernelized embeddings.
\citet{SDJBS2022} propose to learn for each chart again a generative model.
However, these approaches often require a large number of charts and are limited to relatively low data dimensions.
The idea of representing the charts by generative models is further elaborated by \citet{K2018,K2021,SCL2019}.
Here, the authors proposes to train at the same time several (non-variational) autoencoders and a classification network that decides
for each point to which chart it belongs.
In contrast to the clustering-based algorithms, the computational effort scales well for large data dimensions.
On the other hand, the numerical examples in the corresponding papers show that the approach already has difficulties to approximate small toy examples like a torus.

In this paper, we propose to approximate the data manifold by a mixture model of VAEs. 
Using Bayes theorem and the ELBO approximation of the likelihood term we derive a loss function for maximum likelihood estimation of the model weights.
Mixture models of generative models for modelling disconnected data sets were already considered by \citet{BGP2017,HNLP2017,LVTRGS2018,SS2022,YB2021}.
However, they are trained in a different way and to the best of our knowledge none of those is used for manifold learning.

\paragraph{Inverse Problems on Manifolds.}

Many problems in applied mathematics and image processing can be formulated as inverse problems.
Here, we consider an observation $y$ which is generated by
\begin{equation}\label{eq:inverse}
y=\mathcal G(x)+\eta,
\end{equation}
where $\mathcal G\colon\R^n\to\R^m$ is an ill-posed or ill-conditioned, possibly nonlinear forward operator and $\eta$ represents additive noise.
Reconstructing the input $x$ directly from the observation $y$ is usually not possible due to the ill-posed operator and the high dimension of the problem.
As a remedy, the incorporation of prior knowledge is required. 
This is usually achieved by using regularization theory, namely, by minimizing the sum of a data fidelity term $F(x)$ and a regularizer $R(x)$, where $F$ describes the fit of $x$ to $y$ and $R$ incorporates
the prior knowledge.
With the success of deep learning, data-driven regularizers became popular \citep{ADHHMS2022,AMOS2019,GNBDU2022,HHR2022,LOS2018}.

In this paper, we consider a regularizer which constraints the reconstruction $x$ to a learned data manifold $\mathcal M$.
More precisely, we consider the optimization problem
$$
\hat x=\argmin_x F(x)\quad \text{subject to $x\in\mathcal M$,}
$$
where $F(x)=\frac12\|\mathcal G(x)-y\|^2$ is a data-fidelity term.
This corresponds to the regularizer $R(x)$ which is zero for $x\in\mathcal M$ and infinity otherwise.
When the manifold admits a global parameterization given by one single generator $D$, \citet{ASS2022,CKFB2020,DCE2021,GAT2022} propose to 
reformulate the problem as $\hat x=\hat z$, where $\hat z=\argmin_z F(D(x))$.
Since this is an unconstrained problem, it can be solved by gradient based methods.
However, since we consider manifolds represented by several charts, this reformulation cannot be applied.
As a remedy, we propose to use a Riemannian gradient descent scheme.
In particular, we derive the Riemannian gradient using the decoders and encoders of our manifold
and propose two suitable retractions for applying a descent step into the gradient direction.

To emphasize the advantage of using multiple generators, we demonstrate the performance of our method on numerical examples.
We first consider some two- and three-dimensional toy examples.
Finally, we apply our method to deblurring and to electrical impedance tomography (EIT), a nonlinear inverse problem consisting in the reconstruction of the leading coefficient of a second order elliptic PDE from the knowledge of the boundary values of its solutions \citep{cheney1999electrical}.
The code of the numerical examples is available online.\footnote{The code is available at \url{https://github.com/johertrich/Manifold_Mixture_VAEs}}

\paragraph{Outline.}

The paper is organized as follows. In Section~\ref{sec:VAEs}, we revisit VAEs and fix the corresponding notations.
Afterwards, in Section~\ref{sec:chart_learning}, we introduce mixture models of VAEs for learning embedded manifolds of arbitrary dimensions and topologies. Here, we focus particularly on the derivation of the loss function and of the architecture, which allows us to access the charts and their inverses.
For minimizing functions defined on the learned manifold, we propose a Riemannian gradient descent scheme in Section~\ref{sec:opt_prob}.
We provide numerical toy examples for one and two dimensional manifolds in Section~\ref{sec:toy_examples}.
In Section~\ref{sec:inverse_problems}, we discuss the applications to deblurring and to electrical impedance tomography.
Conclusions are drawn in Section~\ref{sec:conclusions}.

\section{Background on Variational Autoencoders and Manifolds}

In this section, we revisit the technical background of the paper. First, we recall the concept of VAEs, their training procedure and of a learned latent space with normalizing flows. Afterwards, we 
give a short literature review on manifold learning with VAEs, and some basic notions from differential geometry.

\subsection{Variational Autoencoders for Manifold Learning}\label{sec:VAEs}

In this paper, we assume that we are given data points $x_1,\dots,x_N\in\R^n$ for a large dimension $n$.
In order to reduce the computational effort and to regularize inverse problems, we assume that these data-points are located in a lower-dimensional manifold.
We aim to learn the underlying manifold from the data points $x_1,\dots,x_N$ with a VAE \citep{KW2013,KW2019}.

A VAE aims to approximate the underlying high-dimensional probability distribution $P_X$ of the random variable $X$ 
with a lower-dimensional latent random variable $Z\sim P_Z$ on $\R^d$ with $d<n$, by using the data points $x_1,\dots,x_N$.
To this end, we define a decoder $D\colon\R^d\to\R^n$ and an encoder $E\colon\R^n\to\R^d$. The decoder approximates $P_X$ by the distribution $P_{\tilde X}$ of a random variable $\tilde X\coloneqq D(Z)+\eta$, where $\eta\sim\mathcal N(0,\sigma_x^2 I_n)$.
Vice versa, the encoder approximates $P_Z$ from $P_X$ by the distribution $P_{\tilde Z}$ of the random variable $\tilde Z\coloneqq E(X)+\xi$ with $\xi\sim\mathcal N(0,\sigma_z^2 I_d)$.
Now, decoder and encoder are trained such that we have $P_X\approx P_{\tilde X}$ and $P_Z\approx P_{\tilde Z}$.
To this end, we aim to maximize the log-likelihood function
$
\ell(\theta)=\sum_{i=1}^N\log(p_{\tilde X}(x_i)),
$
where $\theta$ denotes the parameters  $D$ and $E$ depend upon.

The log-density $\log(p_{\tilde X}(x))$ induced by the model is called the evidence. However, for VAEs the 
computation of the evidence is intractable. Therefore, \citet{KW2013} suggest to approximate it by the \emph{evidence lower bound} given by
\begin{equation}\label{eq:ELBO_}
\ELBO(x|\theta)\coloneqq \E_{\xi\sim\mathcal N(0,I_d)}[\log(p_Z(E(x)+\sigma_z\xi))-\tfrac1{2\sigma_x^2}\|D(E(x)+\sigma_z\xi)-x\|^2].
\end{equation}
For the sake of completeness, we include its derivation in Appendix~\ref{app:ELBO}.
Finally, a VAE is trained by minimizing the loss function which sums up the negative ELBO values of all data points, i.e.,
$$
\mathcal L_\mathrm{VAE}(\theta)=-\sum_{i=1}^N\ELBO(x_i|\theta).
$$

\paragraph{Learned Latent Space.}
It is a known issue of VAEs that the inferred probability distribution is often more blurry than the ground truth distribution of the data.
A detailed discussion of this issue can be found in Section 2.8.2 of the survey paper by \citet{KW2019}.
As a remedy, the authors suggest to choose a more flexible model.
One possibility is to combine VAEs with normalizing flows, as proposed by \citet{RM2015} or \citet{DW2019}.
Following these approaches, we increase the flexibility of the model by using a latent space learned by a normalizing flow.
The idea is based on the observation that transforming probability distributions in low-dimensional spaces is much cheaper than in high-dimensional spaces. 
Consequently, modelling the low-dimensional latent space can be more effective than learning probability transformations in the high-dimensional data space.
Here, we employ the specific loss function proposed by \citet{HHS2022,HHS2023} for training the arising model.
More precisely, we choose the latent distribution
$$
P_Z={\mathcal T}_\#P_{\Xi},
$$
where $\mathcal T\colon\R^d\to\R^d$ is an invertible neural network, called normalizing flow. In this way, $P_Z$ is the push-forward of a fixed (known) distribution $P_{\Xi}$.
Then, the density $p_{Z}$ is given by
$$
p_{Z}(z)=p_{\Xi}(\mathcal T^{-1}(z))|\mathrm{det}(\nabla \mathcal T^{-1}(z))|.
$$
The parameters of $\mathcal T$ are considered as trainable parameters.
Then, the ELBO reads as
\begin{equation}\label{eq:ELBO}
\begin{aligned}
\ELBO(x|\theta)&\coloneqq \E_{\xi\sim\mathcal N(0,I_d)}[\log(p_{\Xi}(\mathcal T^{-1}(E(x)+\sigma_z\xi)))\\
&+\log(|\mathrm{det}(\nabla \mathcal T^{-1}(E(x)+\sigma_z\xi))|)-\tfrac1{2\sigma_x^2}\|D(E(x)+\sigma_z\xi)-x\|^2],
\end{aligned}
\end{equation}
where $\theta$ are the parameters of the decoder, the encoder and of the normalizing flow $\mathcal T$.

In the literature, there exist several invertible neural network architectures based on coupling blocks \citep{DSB2016,KD2018}, residual networks \citep{BGCD2019,CBDJ2029,H2022}, ODE representations \citep{CRBD2018,GCBS2018,OFLR2021} and autoregressive flows \citep{HKLC2018}.
In our numerics, we use the coupling-based architecture proposed by~\citet{AKWR2018}.

\paragraph{Manifold Learning with VAEs.} In order to obtain a lower-dimensional representation of the data points, some papers propose to approximate the
data-manifold by $\mathcal M\coloneqq\{D(z):z\in\R^d\}$ (see, e.g., \citealp{ASS2022,CKFB2020,DCE2021,GAT2022}). 
However, this is only possible if the data-manifold admits a global parameterization, i.e., it can be approximated by one
generating function. 
This assumption is often violated in practice.
As a toy example, consider the one-dimensional manifold embedded in $\R^2$ that consists of two circles, see Figure~\ref{fig:toy_ex_data}. 
This manifold is disconnected and contains ``holes''. 
Consequently, the topologies of the manifold and of the latent space $\R$ do not coincide,  so that the manifold cannot be approximated by a VAE.
Indeed, this can be verified numerically. 
When we learn a VAE for approximating samples from this manifold, we observe that the two (generated) circles are not closed and that both components are connected, see Figure~\ref{fig:one_gen}.
As a remedy, in the next section, we propose the use of multiple generators to resolve this problem, see Figure~\ref{fig:four_gen}.
For this purpose, we need the notion of charts and atlases.

\begin{figure}
\begin{subfigure}[t]{0.31\textwidth}
\includegraphics[width=\textwidth]{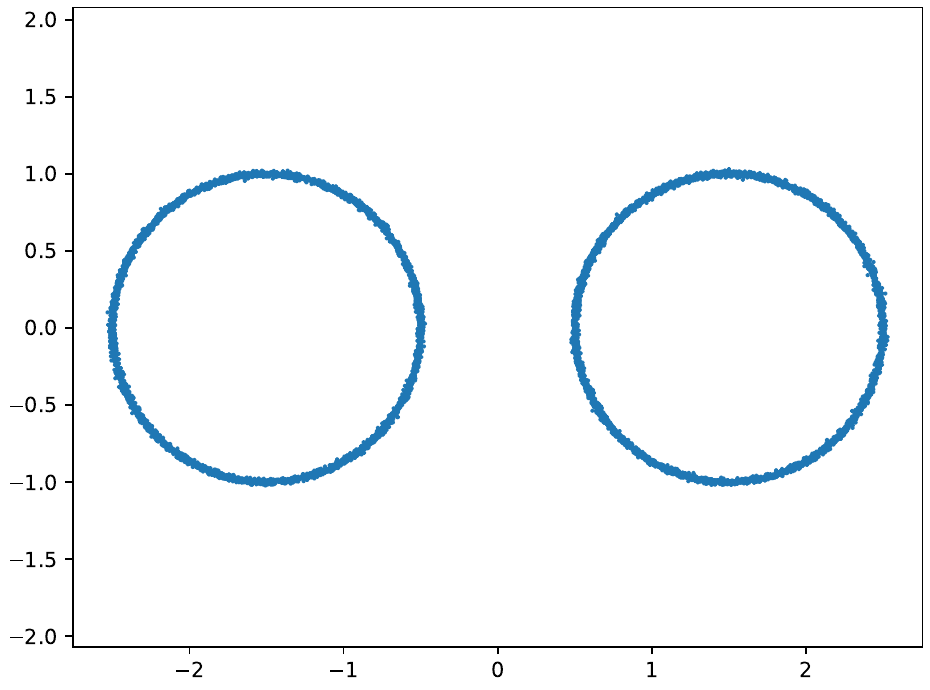}
\caption{Noisy samples from the manifold.}
\label{fig:toy_ex_data}
\end{subfigure}\hfill
\begin{subfigure}[t]{0.31\textwidth}
\includegraphics[width=\textwidth]{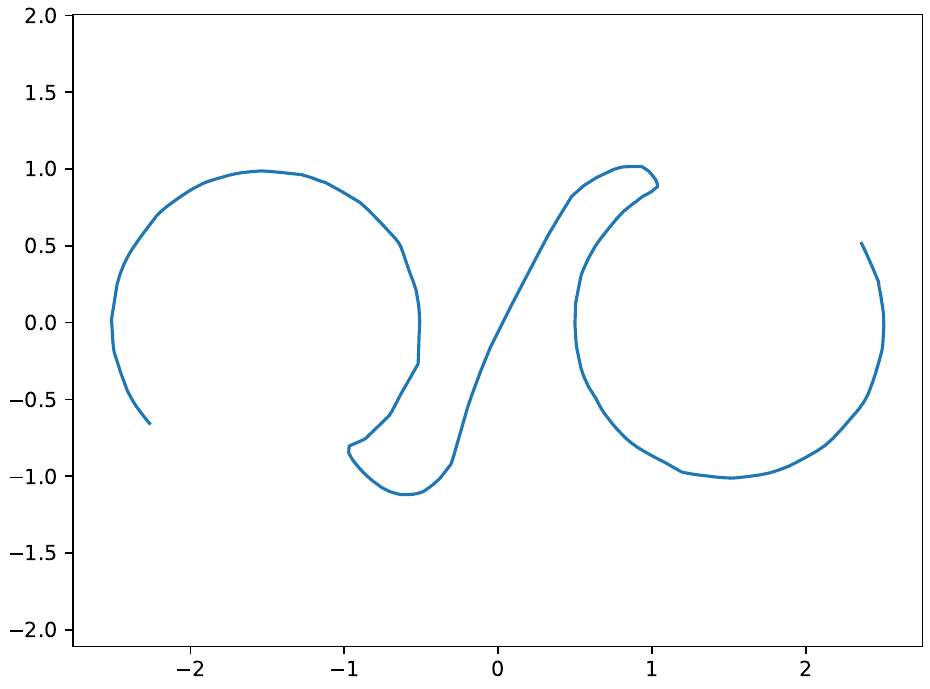}
\caption{Approximation with one chart.}
\label{fig:one_gen}
\end{subfigure}\hfill
\begin{subfigure}[t]{0.31\textwidth}
\includegraphics[width=\textwidth]{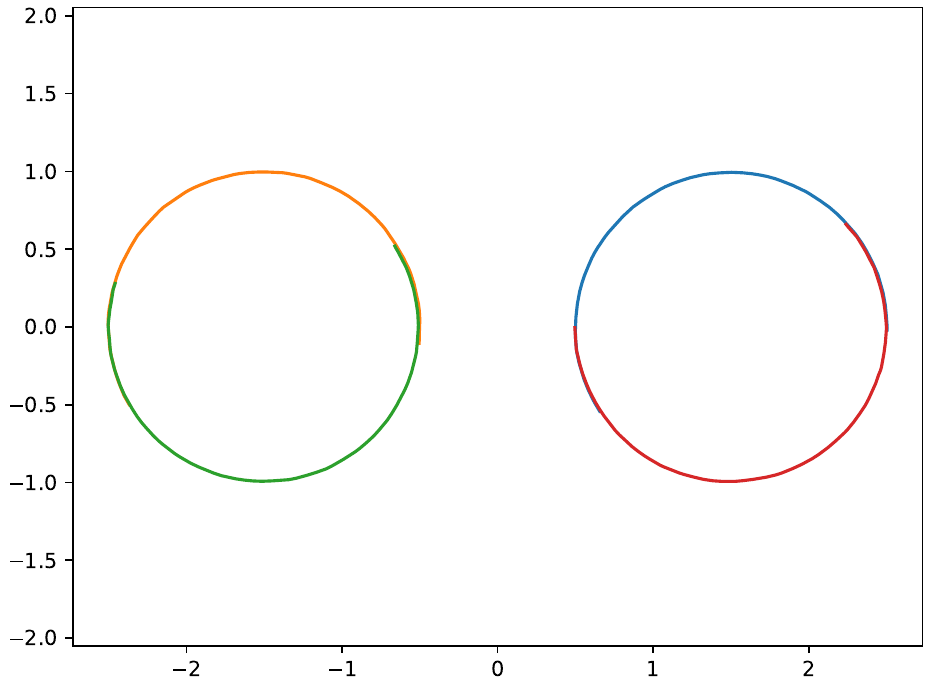}
\caption{Approximation with four charts.}
\label{fig:four_gen}
\end{subfigure}
\caption{Example of a one-dimensional manifold that admits no global parameterization.}
\end{figure}

\subsection{Embedded Manifolds}

A subset $\mathcal M\subseteq \R^n$ is called a $d$-dimensional embedded differentiable  manifold if there exists a family $(U_k,\varphi_k)_{k\in I}$ of relatively open sets $U_k\subseteq \mathcal M$ with $\bigcup_{k\in I} U_k=\mathcal M$ and mappings $\varphi_k\colon U_k\to\R^d$
such that for every $k,l\in I$ 
\begin{itemize}
\item[-] $\varphi_k$ is a homeomorphism between $U_k$ and $\varphi_k(U_k)$;
\item[-] the inverse $\varphi_k^{-1}\colon\varphi_k(U_k)\to U_k$ is continuously differentiable;
\item[-] the transition map $\varphi_k\circ\varphi_l^{-1}\colon \varphi_l(U_k\cap U_l)\to\varphi_k(U_k\cap U_l)$ is continuously differentiable;
\item[-] and the Jacobian $\nabla \varphi_k^{-1}(x)$ of $\varphi_k^{-1}$ at $x$ has full column-rank for any $x\in\varphi_k(U_k)$.
\end{itemize}
We call the mappings $\varphi_k$ charts and the family $(U_k,\varphi_k)_{k\in I}$ an atlas.
With an abuse of notation, we sometimes also call the set $U_k$ or the pair $(U_k,\varphi_k)$ a chart.
 Every compact manifold admits an atlas with finitely many charts $(U_k,\varphi_k)_{k=1}^K$, by definition of compactness.

\section{Chart Learning by Mixtures of VAEs}
\label{sec:chart_learning}

In order to approximate (embedded) manifolds with arbitrary (unknown) topology, we propose to learn several local parameterizations of the manifold instead of a global one. 
To this end, we propose to use mixture models of VAEs.

\paragraph{An Atlas as Mixture of VAEs.}

In this paper, we propose to learn the atlas of an embedded manifold $\mathcal M$ by representing it as a mixture model of variational autoencoders with decoders $D_k\colon\R^d\to\R^n$, encoders $E_k\colon\R^n\to\R^d$ and normalizing flows $\mathcal T_k$ in the latent space, for $k=1,\dots,K$.
Then, the inverse of each chart $\varphi_k$ will be represented by  $\varphi_k^{-1}=\mathcal D_k\coloneqq D_k\circ \mathcal T_k$. Similarly, the chart $\varphi_k$ itself is represented by the mapping $\mathcal E_k\coloneqq \mathcal T_k^{-1}\circ E_k$ restricted to the manifold.
Throughout this paper, we denote the parameters of $(D_k,E_k,\mathcal T_k)$ by $\theta_k$.
Now, let $\tilde X_k$, $k=1,\dots,K$, be the random variable generated by the decoder $D_k$ as in the previous section.
Then, we approximate the distribution $P_X$ of the noisy samples from the manifold by the random variable 
$\tilde X\coloneqq\tilde X_J$, where $J$ is a discrete random variable on $\{1,\dots,K\}$ with $P(J=k)=\alpha_k$ with mixing weights $\alpha_k>0$ fulfilling $\sum_{k=1}^K\alpha_k=1$.

\subsection{Training of Mixtures of VAEs}

\paragraph{Loss function.}
Let $x_1,\dots,x_N$ be the noisy training samples. We initialize the weights $\alpha$ by $\alpha_k=\frac1K$ for all $k$. They will be estimated later in the training algorithm (see Algorithm~\ref{alg:training}).
In order to train mixtures of VAEs, we again minimize an approximation of an upper bound to the negative log likelihood function $-\sum_{i=1}^N\log(p_{\tilde X}(x_i))$.
To this end, we employ the law of total probability and Jensen's inequality to obtain 
\begin{align}
\log(p_{\tilde X}(x_i))
&=\log\Big(\sum_{k=1}^K\beta_{ik}p_{\tilde X}(x_i)\big) 
\geq\log\Big(\sum_{k=1}^K\beta_{ik}\alpha_k p_{\tilde X_k}(x_i)\Big)\\
&\geq\sum_{k=1}^K \beta_{ik}\big(\log(p_{\tilde X_k}(x_i))+\log(\alpha_k)\big)
=\sum_{k=1}^K \beta_{ik}\log(p_{\tilde X_k}(x_i))-\log(K)
\end{align}
where $\beta_{ik}\coloneqq P(J=k|\tilde X=x_i)$ is the probability that the sample $x_i$ 
was generated by the $k$-th random variable $\tilde X_k$ and we used that $\alpha_k=\frac1K$.
Using the definition of conditional probabilities, we observe that

\begin{equation}\label{eq_true_betas}
\beta_{ik}=P(J=k|\tilde X=x_i)=\frac{P(J=k)p_{\tilde X_k}(x_i)}{p_{\tilde X}(x_i)}=\tfrac{\alpha_k p_{\tilde X_k}(x_i)}{\sum_{j=1}^K \alpha_j p_{\tilde X_j}(x_i)}.
\end{equation}
As the computation of $p_{\tilde X_k}$ is intractable, we replace it by the ELBO \eqref{eq:ELBO}, i.e., we approximate $\beta_{ik}$ by
\begin{equation}\label{eq_approx_betas}
\tilde \beta_{ik}=\tfrac{\alpha_k \exp(\ELBO(x_i|\theta_k))}{\sum_{j=1}^K \alpha_j \exp(\ELBO(x_i|\theta_j))}.
\end{equation}
For the architecture used in our numerical examples in Section~\ref{sec:toy_examples}, we can bound the error introduced by this approximation as in Corollary~\ref{cor:Lipschitz_bound} of Appendix~\ref{app:Lipschitz_bound}.
More precisely, we show that there exists some $L$ such that $\frac1L\beta_{ik}\leq\tilde\beta_{ik}\leq L\beta_{ik}$.
Then, we arrive at the approximation
\begin{equation}\label{eq:nll_approx}
\log(p_{\tilde X}(x_i))\approx \ell(x_i|\Theta)=\sum_{k=1}^K\tfrac{\alpha_k \exp(\ELBO(x_i|\theta_k))}{\sum_{j=1}^K \alpha_j \exp(\ELBO(x_i|\theta_j))} \ELBO(x_i|\theta_k) -\log(K).
\end{equation}
By summing up over all $i$, we approximate the negative log likelihood function by the loss function
$$
\mathcal L(\Theta)=-\sum_{i=1}^N\ell(x_i|\Theta),
$$
in order to train the parameters $\Theta=(\theta_k)_{k=1}^K$ of the mixture of VAEs.
Finally, this loss function is then optimized with a stochastic gradient based optimizer like Adam \citep{KB2014}.

\begin{remark}[Lipschitz regularization]
In order to represent the local structure of the manifold and to stabilize the training, we would like to avoid that two points that are close in the latent distribution have too large a distance in the data space.
This corresponds to regularizing the Lipschitz constant of the decoders $D_k$ and of the normalizing flows $\mathcal T_k$.
More precisely, for some small $\sigma>0$, we add the regularization term 
$$
\mathcal R(\Theta)\coloneqq \frac1{\sigma^2}\sum_{k=1}^K\E_{z\sim P_\Xi,\eta\sim\mathcal N(0,\sigma^2)}\Big[D_k(\mathcal T_k(z))-D_k(\mathcal T_k(z+\eta))\Big]
$$
for the first few epochs of the training. Once the charts roughly capture the local structures of the manifold, we avoid the Lipschitz regularization in order to have the interpretation of the loss function as an approximation of the negative log likelihood of the training points.
\end{remark}

\paragraph{Latent Distribution.}

In order to identify the sets $U_k$ defining the domain of the $k$-th learned chart, we choose a latent distribution that is mostly concentrated in the rectangle $(-1,1)^d$.
Then, we can define the domain $U_k$ of the $k$-th learned chart as the set $U_k\coloneqq \mathcal D_k((-1,1)^d)$. 
Since the charts are supposed to overlap, the density should become small close to the boundary.
To this end, we
choose the distribution $P_{\Xi}$ by using the density $p_{\Xi}(z)\coloneqq \prod_{i=1}^d q(z_i)$, where the density $q$ is up to a multiplicative constant given by
$$
q(z)\propto 
\begin{cases}
1,&|z|<0.8,\\ 
4.8 - 4.75 |z|,&|z|\in[0.8,1],\\
0.05 \exp(-100(|z|-1)),&|z|>1,
\end{cases}
$$
see Figure~\ref{fig:latent} for a plot.

Due to approximation errors and noise, we will have to deal with points $x\in\R^n$ that are not exactly located in one of the sets $U_k$. 
In this case, we cannot be certain to which charts the point $x$ actually belongs.
Therefore, we interpret the conditional probability \eqref{eq_true_betas} as the probability that $x_i$ belongs to the $k$-th chart. Since we cannot compute the $\beta_{ik}$ explicitly, we use the approximations 
$\tilde \beta_{ik}$ from \eqref{eq_approx_betas} instead.

\begin{figure}
\centering
\includegraphics[width=.5\textwidth]{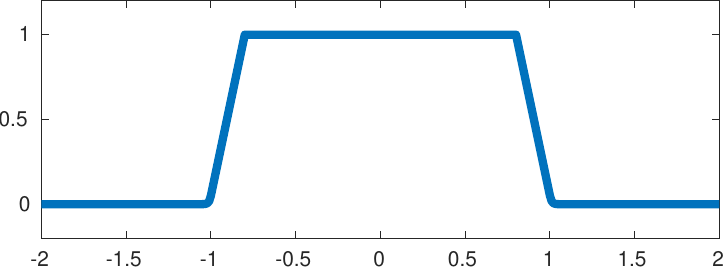}
\caption{Plot of the unnormalized latent density $q$.}
\label{fig:latent}
\end{figure}

\paragraph{Overlapping Charts.}

Since the sets $U_k$ of an atlas $(U_k,\varphi_k)_{k\in I}$ are an open covering of the manifold, they have to overlap near their boundaries.
To this end, we propose the following post-processing heuristic.

By the definition of the loss function $\mathcal L$, we have that the $k$-th generator $\mathcal D_k$ is trained such that $U_k$ contains all points $x_i$ from the training set where $\tilde \beta_{ik}$ is non-zero.
The following procedure  modifies the $\tilde \beta_{ik}$ in such a way that the samples $x$ that are close to the boundary of the $k$-th chart will also be assigned to a second chart.

Since the measure $P_{\Xi}$ is mostly concentrated in $(-1,1)^d$, the region close to the boundary of the $k$-th chart can be identified by $D_k(\mathcal T_k(z))$ for all $z$ close to the boundary of $(-1,1)^d$.
For $c>1$, we define the modified ELBO function
\begin{equation}\label{eq:modELBO}
\begin{aligned}
\ELBO_c(x|\theta_k)&\coloneqq \E_{\xi\sim\mathcal N(0,I_d)}[\log(p_{\Xi}(c \mathcal T_k^{-1}(E_k(x)+\sigma_z\xi)))\\
&+\log(|\mathrm{det}(\nabla \mathcal T_k^{-1}(E_k(x)+\sigma_z\xi))|)-\tfrac1{2\sigma_x^2}\|D_k(E_k(x)+\sigma_z\xi)-x\|^2]
\end{aligned}
\end{equation}
which differs from \eqref{eq:ELBO} by the additional scaling factor $c$ within the first summand.
By construction and by the definiton of $p_\Xi$, it holds that $\ELBO_c(x,\theta_k)\approx\ELBO(x|\theta_k)$ whenever $c\|\mathcal T_k^{-1}(E_k(x))\|_\infty< 0.8$ and $0<\sigma_z\ll0.1$ is small. Otherwise, we have $\ELBO_c(x|\theta_k)< \ELBO(x|\theta_k)$
In particular, $\ELBO_c(x|\theta_k)$ is close to $\ELBO(x|\theta_k)$ if $x$ belongs to the interior of the $k$-th chart and is significantly smaller if it is close to the boundary of the $k$-th chart.

As a variation of $\tilde \beta_{ik}$, now we define 
$$
\hat\gamma_{il}^{(l)}=\frac{\alpha_l\exp(\ELBO_c(x_i|\theta_l))}{\alpha_l\exp(\ELBO_c(x_i|\theta_l))+\sum_{j\in\{1,\dots,K\}\setminus\{l\}} \alpha_j\exp(\ELBO(x_i|\theta_j))}
$$
and
$$
\hat\gamma_{ik}^{(l)}=\frac{\alpha_k\exp(\ELBO(x_i|\theta_k))}{\alpha_l\exp(\ELBO_c(x_i|\theta_l))+\sum_{j\in\{1,\dots,K\}\setminus\{l\}} \alpha_j\exp(\ELBO(x_i|\theta_j))}
$$
for $k,l\in\{1,\dots,K\}$.
Similarly as the $\tilde \beta_{ik}$, $\hat \gamma_{ik}^{(l)}$ can be viewed as a classification parameter, which assigns each $x_i$ either to a chart $k\neq l$ or to the interior part of the $l$-th chart.
Consequently, points located near the boundary of the $l$-th chart will also be assigned to another chart.
Finally, we combine and normalize the $\hat\gamma_{ik}^{(l)}$ by
\begin{equation}\label{eq:gamma}
\gamma_{ik}=\frac{\hat\gamma_{ik}}{\sum_{k=1}^K\hat\gamma_{ik}}, \quad\text{where}\quad\hat \gamma_{ik}=\max_{l=1,\dots,K}\hat\gamma_{ik}^{(l)}.
\end{equation}

Once, the $\gamma_{ik}$ are computed, we update the mixing weights $\alpha$ by $\alpha_k=\sum_{i=1}^N\gamma_{ik}$
and minimize the loss function
\begin{equation}\label{eq:loss_k}
\mathcal L_\text{overlap}(\Theta)=-\sum_{i=1}^N\sum_{k=1}^K \gamma_{ik}\mathrm{ELBO}(x_i|\theta_k)
\end{equation}
using a certain number of epochs of the Adam optimizer \citep{KB2014}.
\\

The whole training procedure for a mixture of VAEs representing the charts of an embedded manifold is summarized in Algorithm~\ref{alg:training}.
The hyperparameters $M_1$, $M_2$ and $M_3$ are chosen large enough such that we have approximately approached a local minimum of the corresponding objective function. In our numerical examples, we choose $M_1=50$, $M_2=150$ and $M_3=50$.

\begin{remark}[Number  of charts $K$]
The choice of the number of charts $K$ is a trade-off between the computational efficiency and the flexibility of the model.
While each manifold has a minimal number of required charts, it could be easily represented by more charts.
Therefore, from a topological viewpoint, there is no upper limit on $K$.
However, since each chart comes with its own pair of decoder and encoder, the number of parameters within the mixture of VAEs, and consequently the training effort, grow with $K$.
\end{remark}

\begin{algorithm}
\begin{algorithmic}
\State 1. Run the Adam optimizer on $\mathcal L(\Theta)+\lambda \mathcal R(\Theta)$ for $M_1$ epochs.
\State 2. Run the Adam optimizer on $\mathcal L(\Theta)$ for $M_2$ epochs.
\State 3. Compute the values $\gamma_{ik}$, $i=1,\dots,N$, $k=1,\dots,K$ from \eqref{eq:gamma}.
\State 4. Compute the mixing weights $\alpha_k=\sum_{i=1}^N\gamma_{ik}$.
\State 5. Run the Adam optimizer on $\mathcal L_\text{overlap}(\Theta)$ from \eqref{eq:loss_k} for $M_3$ epochs.
\end{algorithmic}
\caption{Training procedure for mixtures of VAEs.}
\label{alg:training}
\end{algorithm}

\subsection{Architectures}\label{sec:architectures}

In this subsection, we focus on the architecture of the VAEs used in the mixture model representing the manifold $\mathcal M$.
Since $\mathcal D_k =D_k\circ \mathcal T_k$ represent the inverse of our charts $\varphi_k$, the decoders have to be injective. Moreover, since $\mathcal E_k=\mathcal T_k^{-1}\circ E_k$ represents the chart itself, the condition $\mathcal E_k\circ \mathcal D_k=\mathrm{Id}$ must be verified.
Therefore, we choose the encoder $E_k$ as a left-inverse of the corresponding decoder $D_k$.
More precisely, we use the decoders of the form
\begin{equation}\label{eq_generator}
D_k=T_L\circ A_L\circ\cdots \circ T_1\circ A_1,
\end{equation}
where the $T_l\colon\R^{d_l}\to\R^{d_l}$ are invertible neural networks and $A_l\colon\R^{d_{l-1}}\to\R^{d_l}$ are fixed injective linear operators  for $l=1,\dots,L$, $d=d_0<d_1<\cdots<d_L=n$. 
As it is a concatenation of injective mappings, we obtain that $D_k$ is injective.
Finally, the corresponding encoder is given by
\begin{equation}\label{eq_encoder}
E_k=A_1^\dagger \circ T_1^{-1}\circ\cdots\circ A_L^\dagger\circ T_L^{-1},\quad A^\dagger=(A^\tT A)^{-1}A^\tT.
\end{equation}
Then, it holds by construction that $\mathcal E_k\circ \mathcal D_k= \mathrm{Id}$.

In this paper, we build the invertible neural networks $T_l$ and the normalizing flows $\mathcal T_k$ out of coupling blocks as proposed by \citet{AKWR2018} based on the real NVP architecture \citep{DSB2016}.
To this end, we split the input $z\in\R^{d_l}$ into two parts $z=(z_1,z_2)\in\R^{d_{l}^1}\times\R^{d_{l}^2}$ 
with $d_l=d_{l}^1+d_{l}^2$ and define $T_l(z)=(x_1,x_2)$ with 
$$x_1=z_1 \, \mathrm{e}^{s_{2}(z_2)} + t_{2}(z_2)
\quad\text{and}\quad 
x_2=z_2 \, \mathrm{e}^{s_{1}(x_1)}+ t_{1}(x_1),
$$
where $s_{1},t_{1}\colon\R^{d_{l}^1}\to\R^{d_{l}^2}$ and $s_{2},t_{2}\colon\R^{d_{l}^2}\to\R^{d_{l}^1}$ are arbitrary subnetworks (depending on $l$).
Then, the inverse $T_l^{-1}(x_1,x_2)$ can analytically be derived as $z=(z_1,z_2)$ with 
$$
z_2=\big(x_2 - t_{1}(x_1)   \big) \,\mathrm{e}^{-s_{1}(x_1)}
\quad\text{and}\quad
z_1=\big(x_1 - t_{2}(z_2) \big) \,\mathrm{e}^{-s_{2}(z_2)}.
$$

\begin{remark}[Projection onto learned charts]\label{rem_projection}
Consider a decoder $D_k$ and an encoder $E_k$ as defined above.
By construction, the mapping $\pi_k=D_k\circ E_k$ is a (nonlinear) projection onto $\mathrm{range}(D_k)=\mathrm{range}(\pi_k)$, in the sense that $\pi_k\circ\pi_k=\pi_k$ and that $\pi_{k|\mathrm{range}(D_k)}$ is the identity on $\mathrm{range}(D_k)$.
Consequently, the mapping $\pi_k$ is a projection on the range of $D_k$ which represents the $k$-th chart of $\mathcal M$.
In particular, there is an open neighborhood $V\coloneqq\pi_k^{-1}(U_k)\subseteq\R^n$ such that $\pi_{k|V}$ is a projection onto $U_k$.
\end{remark}

\section{Optimization on Learned Manifolds}
\label{sec:opt_prob}

As motivated in the introduction, we are interested in  optimization problems of the form
\begin{equation}
\label{opt_prob}
\min_{x\in\R^n} F(x)\quad \text{subject to}\quad x\in\mathcal M,   
\end{equation}
where $F\colon\R^n\to\R$ is a differentiable function and $\mathcal M$ is available only through some data points.
In the previous section, we proposed a way to represent the manifold $\mathcal M$ by a mixture model $(D_k,E_k,\mathcal T_k)$ of VAEs.
This section outlines a gradient descent algorithm for the solution of \eqref{opt_prob} once the manifold is learned.

As outlined in the previous section, the inverse charts $\varphi_k^{-1}$ of the manifold $\mathcal M$ are modeled by $\mathcal D_k\coloneqq D_k\circ\mathcal T_k$. The chart $\varphi_k$ itself is given by the mapping $\mathcal E_k\coloneqq \mathcal T_k^{-1}\circ E_k$ restricted to the manifold.
For the special case of a VAE with a single generator $\mathcal D$, \citet{ASS2022,CKFB2020,DCE2021} propose to solve \eqref{opt_prob} in the latent space.
More precisely, starting with a latent initialization $z_0\in\R^d$ they propose to solve
$$
\min_{z\in\R^d} F(\mathcal D(z))
$$
using a gradient descent scheme.
However, when using multiple charts, such a gradient descent scheme heavily depends on the current chart.
Indeed, the following example shows that the gradient direction can change significantly, if we use a different chart.

\begin{example}
Consider the two-dimensional manifold $\R^2$ and the two learned charts given by the generators 
$$
\mathcal D_1(z_1,z_2)=(10 z_1,z_2),\quad\text{and}\quad\mathcal D_2(z_1,z_2)=(z_1,10 z_2).
$$
Moreover let $F\colon\R^2\to\R$ be given by $(x,y)\mapsto x+y$.
Now, the point $x^{(0)}=(0,0)$ corresponds for  both charts  to $z^{(0)}=(0,0)$.
A gradient descent step with respect to $F\circ \mathcal D_k$, $k=1,2$, using step size $\tau$ yields the 
latent values
\begin{align}
(z_1^{(1)},z_2^{(1)})&=z^{(0)}-\tau\nabla (F\circ \mathcal D_1)(z^{(0)})=-(10 \tau,\tau),\\
(\tilde z_1^{(1)},\tilde z_2^{(1)})&=z^{(0)}-\tau\nabla (F\circ \mathcal D_2)(z^{(0)})=-(\tau,10 \tau).
\end{align}
Thus, one gradient steps with respect to $F\circ\mathcal D_k$ yields the values
$$
x^{(1)}=\mathcal D_1(z^{(1)})=-(100 \tau,\tau),\quad \tilde x^{(1)}=\mathcal D_2(\tilde z^{(1)})=-(\tau,100\tau).
$$
Consequently, the gradient descent steps with respect to two different charts can point into completely different directions, independently of the step size $\tau$.
\end{example}

Therefore, we aim to use a gradient formulation which is independent of the parameterization of the manifold.
Here, we use the concept of the Riemannian gradient with respect to the Riemannian metric, which is inherited 
from the Euclidean space in which the manifold $\mathcal M$ is embedded.
To this end, we first revisit some basic facts about Riemannian gradients on embedded manifolds which can be found, e.g., in the book of \citet{AMS2009}.
Afterwards, we consider suitable retractions in order to perform a descent step in the direction of the negative Riemannian gradient.
Finally, we use these notions in order to derive a gradient descent procedure on a manifold given by mixtures of VAEs.

\subsection{Background on Riemannian Optimization}

\paragraph{Riemannian Gradients on Embedded Manifolds.}

Let $x\in\mathcal M\subseteq\R^n$ be a point on the manifold,  let $\varphi\colon U\to\R^d$ be a chart with $x\in U$ and $\varphi^{-1}\colon\varphi(U)\to U$ be its inverse.
Then, the tangent space is given by the set of all directions $\dot\gamma(0)$ of differentiable curves $\gamma\colon(-\epsilon,\epsilon)\to\mathcal M$ with $\gamma(0)=x$.
More precisely, it is given by the linear subspace of $\R^n$ defined as
\begin{equation}\label{eq:tangent_space}
T_x\mathcal M=\{Jy:y\in\R^d\}, \quad \text{ where } J\coloneqq \nabla \varphi^{-1}(\varphi(x))\in\R^{n\times d}.
\end{equation}
The tangent space inherits the Riemannian metric from $\R^n$, i.e., we equip the tangent space with the inner product
$$
\langle u,v\rangle_x=u^\tT v,\quad u,v\in T_x\mathcal M.
$$

A function $f\colon\mathcal M\to\R^m$ is called differentiable if for any differentiable curve $\gamma\colon(-\epsilon,\epsilon)\to\mathcal M$ we have that $f\circ \gamma\colon(-\epsilon,\epsilon)\to\R^m$ is differentiable.
In this case the differential of $f$ is defined by
$$
D f(x)\colon T_x\mathcal M\to\R^m,\qquad D f(x)[h]=\frac{\dx}{\dx t} f(\gamma_h(t))\Big|_{t=0},
$$
where $\gamma_h\colon(-\epsilon,\epsilon)\to\mathcal M$ is a curve with $\gamma_h(0)=x$ and $\dot\gamma_h(0)=h$.
Finally, the Riemannian gradient of a differentiable function $f\colon\mathcal M\to\R$ is given by the unique element $\nabla_{\mathcal M}f\in T_x\mathcal M$ which fulfills
$$
D f(x)[h]=\langle \nabla_{\mathcal M}f,h\rangle_x \quad\text{for all}\quad h\in T_x\mathcal M.
$$
\begin{remark}\label{rem:riem_grad_ex}
In the case that $f$ can be extended to a differentiable function on a neighborhood of $\mathcal M$, these formulas can be simplified.
More precisely, we have that the differential is given by
$
D f(x)[h]=h^\tT \nabla f(x)
$, where $\nabla f$ is the Euclidean gradient of $f$. In other words, $D f(x)$ is the Fréchet derivative of $f$ at $x$ restricted to $T_x\mathcal M$.
Moreover, the Riemannian gradient coincides with the orthogonal projection of $\nabla f$ on the tangent space, i.e.,
$$
\nabla_{\mathcal M}f(x)=P_{T_x\mathcal M}(\nabla f(x)),\quad P_{T_x\mathcal M}(y)=\argmin_{z\in T_x\mathcal M}\|y-z\|^2.
$$
Here the orthogonal projection can be rewritten as
$P_{T_x\mathcal M}=J (J^\tT J)^{-1} J^\tT$, $J=\nabla \varphi^{-1}(\varphi(x))$
such that the Riemannian gradient is given by $\nabla_{\mathcal M}f(x)=J (J^\tT J)^{-1} J^\tT\nabla f(x)$.
\end{remark}

\paragraph{Retractions.}

Once the Riemannian gradient is computed, we aim to perform a descent step in the direction 
of $-\nabla_{\mathcal M}f(x)$ on $\mathcal M$.
To this end, we need the concept of  retraction.
Roughly speaking, a retraction in $x$ maps a tangent vector $\xi$ to the point that is reached by moving
from $x$ in the direction $\xi$.
Formally, it is defined as follows.
\begin{definition}
A differentiable mapping $R_x\colon V_x\to\mathcal M$ for some neighborhood $V_x\subseteq T_x\mathcal M$ of $0$ is called a retraction in $x$, if $R_x(0)=x$ and
$$
D R_x(0)[h]=h\quad\text{for all}\quad h\in T_0(V_x)= T_x\mathcal M,
$$
where we identified $T_0(V_x)$ with $T_x\mathcal M$.
Moreover, a differentiable mapping $R=(R_x)_{x\in\mathcal M}\colon V\to\mathcal M$ on a subset of the tangent bundle $V=(V_x)_{x\in\mathcal M}\subseteq T\mathcal M=(T_x\mathcal M)_{x\in\mathcal M}$ is a retraction on $\mathcal M$, if for all $x\in \mathcal M$ we have that $R_x$ is a retraction in $x$.
\end{definition}

Now, let $R\colon V\to\mathcal M$ be a retraction on $\mathcal M$. 
Then, the Riemannian gradient descent scheme starting at $x_0\in\mathcal M$ with step size $\tau>0$ is defined by
$$
x_{t+1}=R_{x_t}(-\tau\nabla_{\mathcal M} f(x_t)).
$$

\subsection{Retractions for Learned Charts}

In order to apply this gradient scheme for a learned manifold given by the learned mappings $(\mathcal D_k,\mathcal E_k)_{k=1}^K$, 
we consider two types of retractions. 
We introduce them and show that they are indeed retractions in the following lemmas.
The first one generalizes the idea from Lemma 4 and Proposition 5 in \citet{AM2012} of moving along the 
tangent vector in $\R^n$ and reprojecting onto the manifold. 
However, the results by \citet{AM2012} are based on the orthogonal projection, which is hard or even impossible to compute.
Thus, we replace it by some more general projection $\pi$.
In our applications,  $\pi$ will be chosen as in Remark~\ref{rem_projection}, i.e., 
we set
$
\pi(x)=\mathcal D_k(\mathcal E_k(x)).
$

\begin{lemma}\label{lem:retr1}
Let $x\in\mathcal M$, $U_x\subseteq\R^n$ be a neighborhood of $x$ in $\R^n$, $\pi\colon U_x\to\mathcal M\cap U_x$ be a differentiable map such that  $\pi\circ\pi=\pi$. Set $V_x=\{h\in T_x\mathcal M\subseteq\R^n: x+h\in U_x\}$. Then
$$
R_x(h)=\pi(x+h),\qquad h\in V_x,
$$
defines a retraction in $x$.
\end{lemma}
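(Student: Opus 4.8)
The plan is to check directly the two conditions in the definition of a retraction in $x$, namely $R_x(0)=x$ and $DR_x(0)[h]=h$ for all $h\in T_x\mathcal M$, after first recording that $R_x$ is a well-defined differentiable map into $\mathcal M$. For the latter, observe that the translation $h\mapsto x+h$ is smooth, that $V_x=(U_x-x)\cap T_x\mathcal M$ is an open neighborhood of $0$ in the subspace $T_x\mathcal M$ (so that the identification $T_0(V_x)=T_x\mathcal M$ used in the definition is immediate), and that $\pi$ is differentiable by hypothesis; hence $R_x=\pi(x+\cdot)$ is differentiable and, since $\pi$ takes values in $\mathcal M\cap U_x$, maps $V_x$ into $\mathcal M$.

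First I would verify $R_x(0)=x$. Because $\pi\circ\pi=\pi$, the map $\pi$ is the identity on its image $\mathcal M\cap U_x$; as $x\in\mathcal M\cap U_x$, this gives $R_x(0)=\pi(x)=x$.

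The substance of the proof is the derivative condition. By the chain rule, $DR_x(0)[h]=D\pi(x)[h]$ for $h\in T_x\mathcal M$, the inner derivative of the translation being the identity. To evaluate this, I would fix $h\in T_x\mathcal M$ and take a differentiable curve $\gamma\colon(-\epsilon,\epsilon)\to\mathcal M$ with $\gamma(0)=x$ and $\dot\gamma(0)=h$, whose existence is guaranteed by the description of the tangent space in \eqref{eq:tangent_space}. For $|t|$ small we have $\gamma(t)\in\mathcal M\cap U_x$, where $\pi$ is the identity, so $\pi(\gamma(t))=\gamma(t)$; differentiating this identity at $t=0$ yields $D\pi(x)[h]=\dot\gamma(0)=h$. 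Thus $DR_x(0)[h]=h$, completing the verification.

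The routine parts—openness of $V_x$, differentiability of $R_x$, and the chain-rule step—are immediate. The one point requiring care is the claim that $D\pi(x)$ acts as the identity on $T_x\mathcal M$; I expect the curve argument above to be the cleanest way to secure it, since it uses only that $\pi$ restricts to the identity on $\mathcal M\cap U_x$. A purely algebraic alternative—differentiating $\pi\circ\pi=\pi$ at $x$ to obtain that $P\coloneqq D\pi(x)$ satisfies $P^2=P$ with $\mathrm{range}(P)\subseteq T_x\mathcal M$—does not by itself force $P|_{T_x\mathcal M}=\mathrm{id}$ without an extra rank argument, so the curve formulation is preferable.
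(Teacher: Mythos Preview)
Your proof is correct and follows essentially the same approach as the paper's: both verify $R_x(0)=x$ from the idempotence of $\pi$, then use a curve $\gamma$ in $\mathcal M$ through $x$ with $\dot\gamma(0)=h$ together with $\pi\circ\gamma=\gamma$ to conclude $\nabla\pi(x)h=h$, and finish with the chain rule applied to $t\mapsto\pi(x+th)$. Your added remarks on well-definedness and on why the curve argument is cleaner than differentiating $\pi\circ\pi=\pi$ are accurate but do not change the route.
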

\begin{proof}
The property $R_x(0)=x$ is directly clear from the definition of $R_x$.
Now let $h\in T_x\mathcal M\subseteq\R^n$ and $\gamma_h\colon(-\epsilon,\epsilon)\to\mathcal M$ be a differentiable curve with $\gamma_h(0)=x$ and $\dot\gamma_h(0)=h$.
As ${\pi}_{|U}$ is the identity on $\mathcal M$, we have by the chain rule that
$$
h=\dot\gamma_h(t) =\frac{\dx}{\dx t} \pi(\gamma_h(t)) \Big|_{t=0}=\nabla \pi(x)\dot\gamma_h(0)=\nabla \pi(x) h,
$$
where $\nabla \pi(x)$ is the Euclidean Jacobian matrix of $\pi$ at $x$.
Similarly,
\begin{equation*}
DR_x(0)[h]=\frac{\dx}{\dx t}R_x(th)\Big|_{t=0}=\frac{\dx}{\dx t}\pi(x+th)\Big|_{t=0}=\nabla \pi(x) h=h.    
\end{equation*}
\end{proof}

The second retraction uses the idea of changing to local coordinates, moving into the gradient direction by using the local coordinates and then going back to the manifold representation. 
Note that similar constructions are considered in Section 4.1.3 in the book of \citet{AMS2009}.
However, as we did not find an explicit proof for the lemma, we give it below for the sake of completeness.

\begin{lemma}\label{lem:retr2}
Let $x\in\mathcal M$ and denote by $\varphi\colon U\to\R^d$ a chart with $x\in U\subseteq \mathcal M$.
Then,
$$
R_x(h)=\varphi^{-1}(\varphi(x)+(J^\tT J)^{-1}J^\tT h), \quad J=\nabla \varphi^{-1}(\varphi(x))
$$
defines a retraction in $x$.
\end{lemma}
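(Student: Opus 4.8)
The plan is to check the two defining conditions of a retraction in $x$ directly, after first confirming that $R_x$ is well defined and differentiable on a neighborhood of $0$ in $T_x\mathcal M$. For the latter, I would observe that the map $h\mapsto\varphi(x)+(J^\tT J)^{-1}J^\tT h$ is linear, hence smooth, and sends $0$ to $\varphi(x)\in\varphi(U)$; since $\varphi(U)$ is open in $\R^d$, this argument stays in $\varphi(U)$ for all $h$ in some neighborhood $V_x\subseteq T_x\mathcal M$ of $0$. Composing with the continuously differentiable inverse chart $\varphi^{-1}\colon\varphi(U)\to U$ then yields a differentiable map $R_x\colon V_x\to U\subseteq\mathcal M$. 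The first condition is immediate: setting $h=0$ gives $R_x(0)=\varphi^{-1}(\varphi(x))=x$.

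The heart of the proof is the computation of $DR_x(0)$. Using the natural identification $T_0(V_x)\cong T_x\mathcal M$, I would evaluate $DR_x(0)[h]=\frac{\dx}{\dx t}R_x(th)\big|_{t=0}$ for $h\in T_x\mathcal M$. Applying the chain rule to $t\mapsto\varphi^{-1}\big(\varphi(x)+t(J^\tT J)^{-1}J^\tT h\big)$ and using $J=\nabla\varphi^{-1}(\varphi(x))$, this derivative equals $J(J^\tT J)^{-1}J^\tT h$.

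To finish, I would recognize $J(J^\tT J)^{-1}J^\tT$ as the orthogonal projection onto $\mathrm{range}(J)$, and recall from \eqref{eq:tangent_space} that $T_x\mathcal M=\{Jy:y\in\R^d\}=\mathrm{range}(J)$. Since $h\in T_x\mathcal M=\mathrm{range}(J)$, the projection fixes it, so $DR_x(0)[h]=h$, which is exactly the second retraction condition. I do not expect a genuine obstacle here: the computation is a one-line chain rule, and both auxiliary facts—differentiability of $R_x$ on its domain and the identity between the Gram-matrix expression and the tangent-space projection—follow directly from the manifold definition and the characterization of $T_x\mathcal M$ already recorded in the excerpt. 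If anything, the only point deserving care is making the identification $T_0(V_x)\cong T_x\mathcal M$ explicit, so that $DR_x(0)$ is correctly read as a linear endomorphism of $T_x\mathcal M$, and the claim reduces to showing that this endomorphism is the identity.
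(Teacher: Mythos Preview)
Your proposal is correct and matches the paper's argument essentially line for line: both verify $R_x(0)=x$ trivially and compute $DR_x(0)[h]=J(J^\tT J)^{-1}J^\tT h$ by the chain rule, then reduce this to $h$ using $h\in\mathrm{range}(J)=T_x\mathcal M$. The only cosmetic difference is that the paper writes $h=Jy$ explicitly and cancels $(J^\tT J)^{-1}J^\tT J=\mathrm{Id}_d$, whereas you phrase the same cancellation as ``the orthogonal projection onto $\mathrm{range}(J)$ fixes $h$''; your additional remark on well-definedness of $R_x$ near $0$ is a welcome detail the paper leaves implicit.
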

\begin{proof}
The property $R_x(0)=x$ is directly clear from the definition of $R_x$.
Now let $h\in T_0(T_x\mathcal M)=T_x\mathcal M\subseteq\R^n$.
By \eqref{eq:tangent_space}, we have that there exists some $y\in\R^d$ such that $h=Jy$.
Then, we have by the chain rule that
$$
DR_x(0)[h]=\frac{\dx}{\dx t}R_x(th)\Big|_{t=0}=(\nabla \varphi^{-1}(\varphi(x))) (J^\tT J)^{-1} J^\tT h=J(J^\tT J)^{-1} J^\tT Jy=Jy=h.
$$
\end{proof}

\subsection{Gradient Descent on Learned Manifolds}

By Lemma~\ref{lem:retr1} and \ref{lem:retr2}, we obtain that the mappings
\begin{equation}\label{eq:retractions}
R_{k,x}(h)=\mathcal D_k(\mathcal E_k(x+h))\quad\text{and}\quad \tilde R_{k,x}(h)=\mathcal D_k(\mathcal E_k(x)+(J^\tT J)^{-1}J^\tT h)
\end{equation}
with $J=\nabla \mathcal D_k(\mathcal E_k(x))$
are retractions in all $x\in U_k$.
If we define $R$ such that $R_x$ is given by $R_k$ for some $k$ such that $x\in U_k$,
then the differentiability of $R=(R_x)_{x\in\mathcal M}$ in $x$ might be violated.
Moreover, the charts learned by a mixture of VAEs only overlap approximately and not exactly.
Therefore, these retractions cannot be extended to a retraction on the whole manifold $\mathcal M$ in general.
As a remedy, we propose the following gradient descent step on a learned manifold.

Starting from a point $x_n$, we first compute for $k=1,\dots,K$ the probability, that $x_n$ belongs to the $k$-th chart. By \eqref{eq_true_betas}, this probability can be approximated by
\begin{equation}\label{eq:beta_descent}
\beta_k\coloneqq \tfrac{\alpha_k\exp(\mathrm{ELBO}(x_n|\theta_k))}{\sum_{j=1}^K \alpha_j\exp(\mathrm{ELBO}(x_n|\theta_j))}.
\end{equation}
Afterwards, we project $x_n$ onto the $k$-th chart by applying $\tilde x_{n,k}=\mathcal D_k(\mathcal E_k(x_n))$ (see Remark~\ref{rem_projection}) and compute the Riemannian gradient $g_{n,k}=\nabla_{\mathcal M}F(\tilde x_{n_k})$.
Then, we apply the retraction $R_{k,\tilde x_{n,k}}$ (or $\tilde R_{k,\tilde x_{n,k}}$) to perform a gradient descent step  
$x_{n+1,k}=R_{k,\tilde x_{n,k}}(-\tau_n g_{n,k})$.
Finally, we average the results by $x_{n+1}=\sum_{k=1}^K\beta_{k}x_{n+1,k}$.

The whole gradient descent step is summarized in Algorithm~\ref{alg:gd_manifold}.
Finally, we  compute the sequence $(x_n)_n$ by applying Algorithm~\ref{alg:gd_manifold} iteratively.

\begin{algorithm}[t]
\begin{algorithmic}
\State Inputs: Function $F\colon\mathcal M\to\R$, point $x_n$, step size $\tau_n>0$.
\For{$k=1,\dots,K$}
\State - Approximate the probability that $x_{n}$ belongs to chart $k$ by computing the
$
\beta_{k}
$
\State \phantom{- }from \eqref{eq:beta_descent}.
\State - Project to the $k$-th chart by $\tilde x_{n,k}=\mathcal D_k(\mathcal E_k(x_n))$.
\State - Compute the Riemannian gradient $g_{n,k}=\nabla_{\mathcal M} F(\tilde x_{n,k})$, e.g., by Remark~\ref{rem:riem_grad_ex}.
\State - Perform a gradient descent with the retraction $R_{k,\tilde x_{n,k}}$, i.e., define
\State \phantom{- }$
x_{n+1,k}=R_{k,\tilde x_{n,k}}(-\tau_n g_{n,k}).$
\EndFor
\State - Average results by computing
$
x_{n+1}=\sum_{k=1}^K\beta_k x_{n+1,k}.
$
\end{algorithmic}
\caption{One gradient descent step on a learned manifold.}
\label{alg:gd_manifold}
\end{algorithm}

For some applications the evaluation of the derivative of $F$ is computationally costly.
Therefore, we aim to take  as large step sizes $\tau_n$ as possible in Algorithm~\ref{alg:gd_manifold}.
On the other hand, large step sizes can lead to numerical instabilities and divergence.
To this end, we use an adaptive step size selection as outlined in Algorithm~\ref{alg:adaptive_steps}.

\begin{remark}[Descent algorithm]
By construction, Algorithm~\ref{alg:adaptive_steps} is a descent algorithm. That is, for a sequence $(x_n)_{n}$ generated by the algorithm it holds that $F(x_{n+1})\leq F(x_n)$. 
With the additional assumption that $F$ is bounded from below, we have that $(F(x_{n}))_n$ is a bounded descending and hence convergent sequence.
However, this does neither imply convergence of the iterates $(x_n)_n$ themselves nor optimality of the limit of $(F(x_n))_n$.
For more details on the convergence of line-search algorithms on manifolds we refer to Section 4 of the book by \citet{AMS2009}.
\end{remark}

\begin{algorithm}[t]
\begin{algorithmic}
\State Input: Function $F$, initial point $x_0$, initial step size $\tau_0$.
\For{n=0,1,\dots}
\State Compute $x_{n+1}$ by Algorithm~\ref{alg:gd_manifold} with step size $\tau_n$.
\While{$F(x_{n+1})>F(x_n)$}
\State Update step size by $\tau_n\leftarrow \frac{\tau_n}2$.
\State Update $x_{n+1}$ by Algorithm~\ref{alg:gd_manifold} with the new step size $\tau_n$.
\EndWhile
\State Set step size for the next step $\tau_{n+1}=\frac{3\tau_n}2$.
\EndFor
\end{algorithmic}
\caption{Adaptive step size scheme for gradient descent on learned manifolds}
\label{alg:adaptive_steps}
\end{algorithm}

\section{Numerical Examples}\label{sec:toy_examples}

\begin{figure}
\centering
\begin{subfigure}[t]{0.19\textwidth}
\includegraphics[width=\textwidth]{imgs/data_two_circles}
\caption*{Two circles}
\end{subfigure}
\begin{subfigure}[t]{0.19\textwidth}
\includegraphics[width=\textwidth]{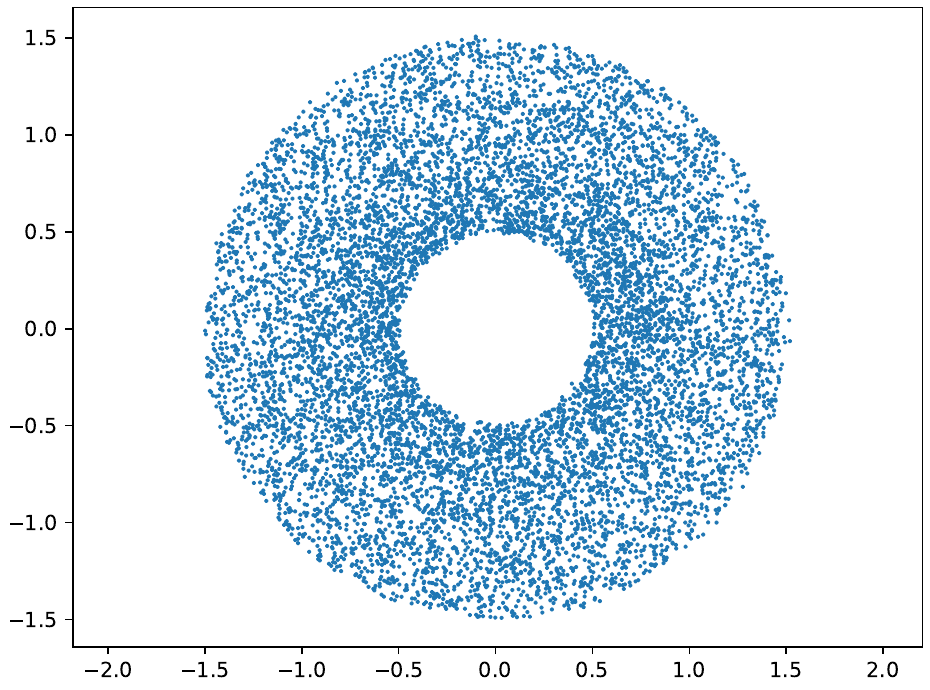}
\caption*{Ring}
\end{subfigure}
\begin{subfigure}[t]{0.19\textwidth}
\includegraphics[width=\textwidth]{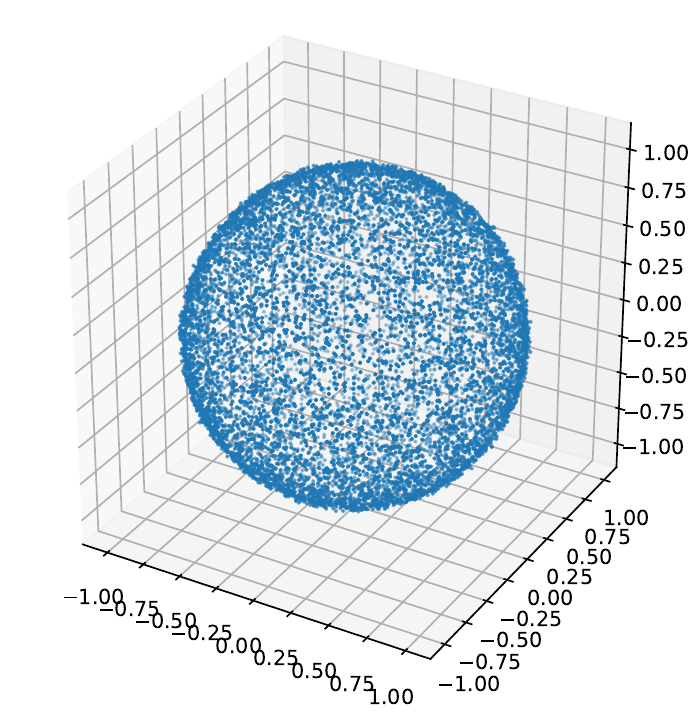}
\caption*{Sphere}
\end{subfigure}
\begin{subfigure}[t]{0.19\textwidth}
\includegraphics[width=\textwidth]{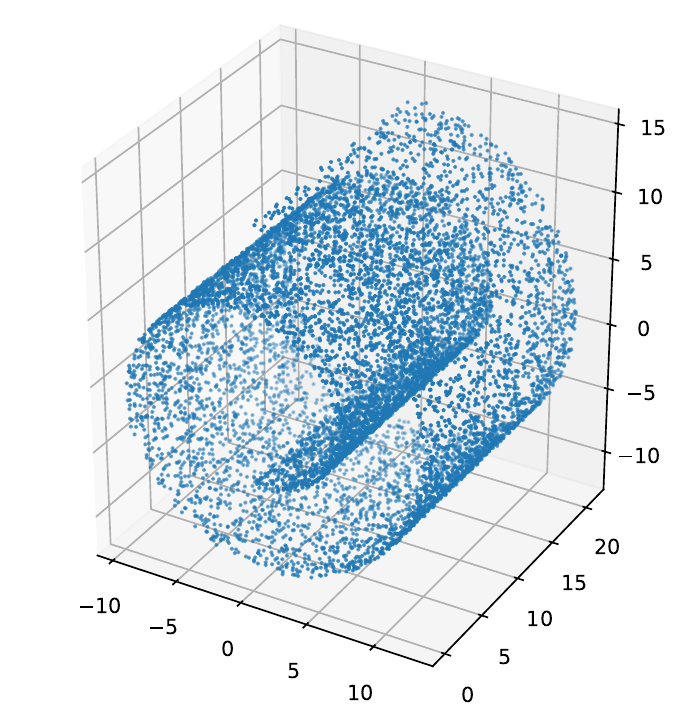}
\caption*{Swiss roll}
\end{subfigure}
\begin{subfigure}[t]{0.19\textwidth}
\includegraphics[width=\textwidth]{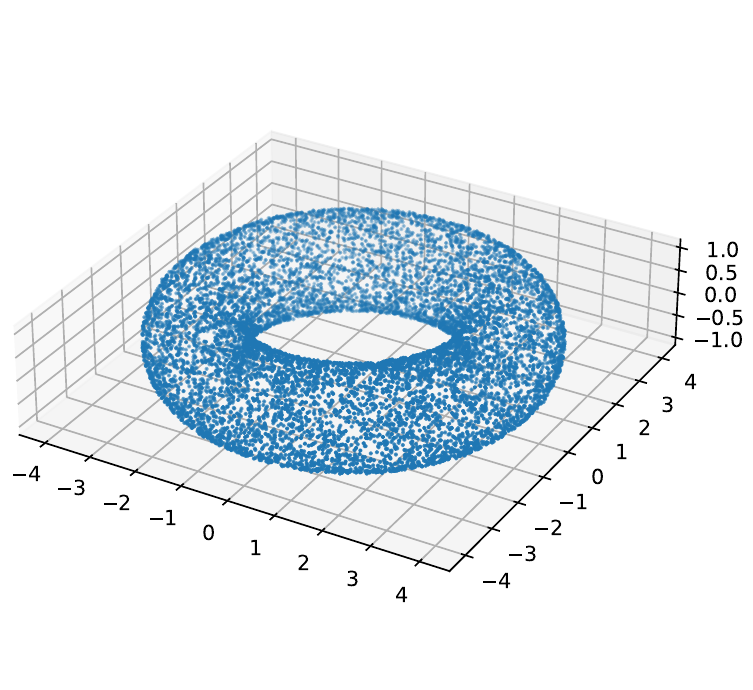}
\caption*{Torus}
\end{subfigure}
\caption{Data sets used for the different manifolds.}
\label{fig:manifolds_data_toy}
\end{figure}

Next, we test the numerical performance of the proposed method. 
In this section, we start with some one- and two-dimensional manifolds embedded in the two- or three-dimensional Euclidean space.
We use the architecture from Section~\ref{sec:architectures} with $L=1$.
That is, for all the manifolds, the decoder is given by $\mathcal T\circ A$ where $A\colon\R^d\to\R^n$ is given by 
$x\mapsto (x,0)$ if $d=n-1$ and by $A=\mathrm{Id}$ if $d=n$ and $\mathcal T$ is an invertible neural network with $5$ invertible coupling blocks where the subnetworks have two hidden layers and $64$ neurons in each layer. 
The normalizing flow modelling the latent space consists of $3$ invertible coupling blocks with the same architecture.
We train the mixture of VAEs for $200$ epochs with the Adam optimizer. Afterwards we apply the overlapping procedure for $50$ epochs, as in Algorithm~\ref{alg:training}.

We consider the manifolds ``two circles'', ``ring'', ``sphere'', ``swiss roll'' and ``torus''.
The (noisy) training data are visualized in Figure~\ref{fig:manifolds_data_toy}.
The number of charts $K$ is given in the following table.
\begin{center}
\begin{tabular}{c|ccccc}
&Two circles&Ring&Sphere&Swiss roll&Torus\\\hline
Number of charts&$4$&$2$&$2$&$4$&$6$
\end{tabular}
\end{center}
We visualize the learned charts in Figure~\ref{fig:learned_charts_toy}. Moreover, additional samples generated by the learned mixture of VAEs are shown in Figure~\ref{fig:generated_samples_toy}.
We observe that our model covers all considered manifolds and provides a reasonable approximation of different charts.
Finally, we test the gradient descent method from Algorithm~\ref{alg:gd_manifold} with some linear and quadratic functions, which often appear as data fidelity terms in inverse problems:
\begin{itemize}
\item $F(x)=x_2$ on the manifold ``two circles'' with initial points $\frac{x_0}{\|x_0\|}\pm(1.5,0)$ for $x_0=(\pm 0.2,1)$;
\item $F(x)=\|x-(-1,0)\|^2$ on the manifold ``ring'' with initial points $(1,\pm0.4)$;
\item $F(x)=\|x-(0,0,-2)\|^2$ on the manifold ``sphere'' with inital points $x_0/\|x_0\|$ for $x_0\in\{(0.3\cos(\tfrac{\pi k}{5}),0.3\sin(\tfrac{\pi k}{5}),1):k=0,\dots,9)\}$;
\item $F(x)=\|x-(-5,0,0)\|^2$ on the manifold ``torus'', where the inital points are drawn randomly from the training set.
\end{itemize}
We use the retraction from Lemma~\ref{lem:retr1} with a step size of $0.01$. 
The resulting trajectories are visualized in Figure~\ref{fig:trajectories_toy}.
We observe that all the trajectories behave as expected and approach the closest minimum of the objective function, even if this is not in the same chart of the initial point.

\begin{figure}
\centering
\begin{subfigure}[t]{0.24\textwidth}
\includegraphics[width=\textwidth]{imgs/plot_two_circles_4_gen.pdf}
\caption*{Two circles}
\end{subfigure}
\begin{subfigure}[t]{0.24\textwidth}
\includegraphics[width=\textwidth]{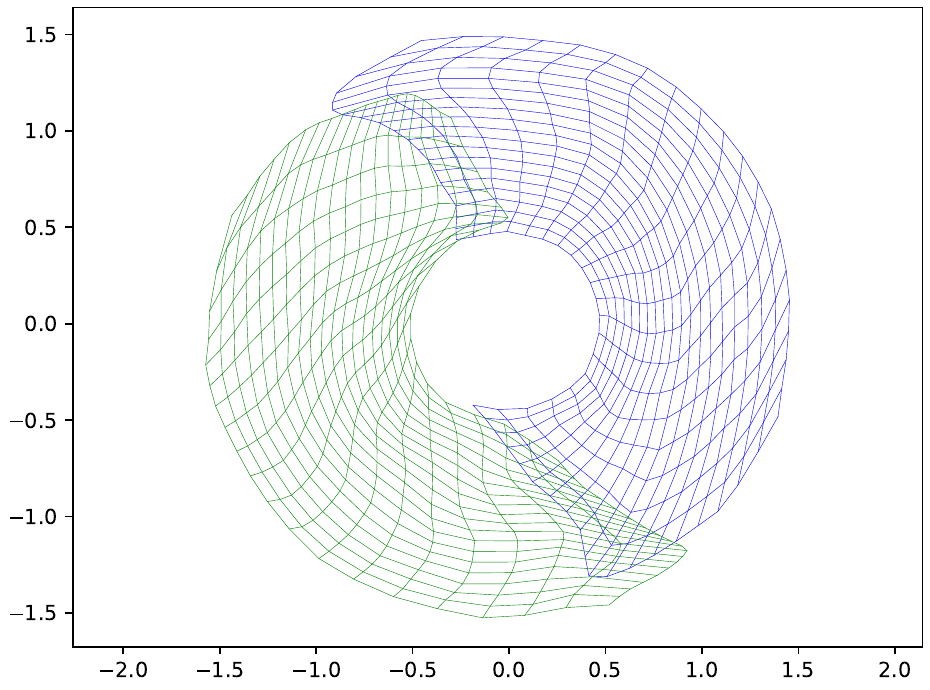}
\caption*{Ring}
\end{subfigure}
\begin{subfigure}[t]{0.48\textwidth}
\includegraphics[width=\textwidth]{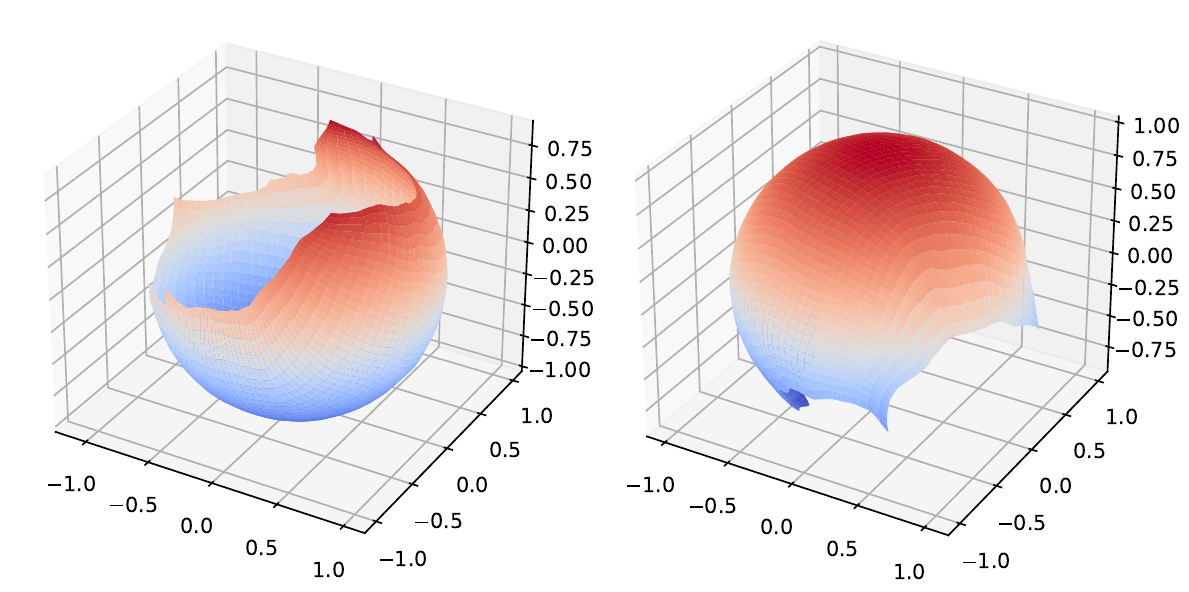}
\caption*{Sphere}
\end{subfigure}
\begin{subfigure}[t]{\textwidth}
\includegraphics[width=\textwidth]{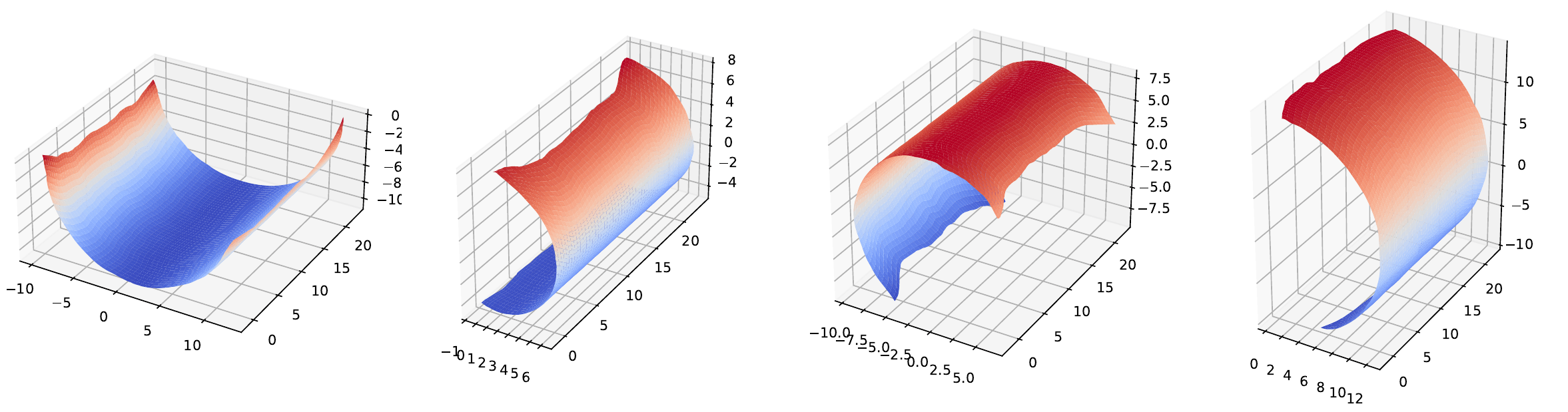}
\caption*{Swiss roll}
\end{subfigure}
\begin{subfigure}[t]{\textwidth}
\includegraphics[width=\textwidth]{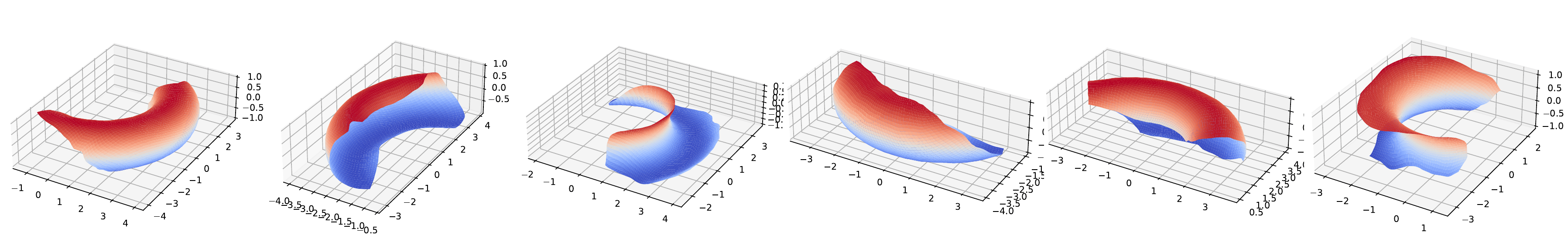}
\caption*{Torus}
\end{subfigure}
\caption{Learned charts for the different manifolds. For the manifolds ``two circles'' and ``ring'', each color represents one chart. For the manifolds ``sphere'', ``swiss roll'' and ``torus'' we plot each chart in a separate figure.}
\label{fig:learned_charts_toy}
\end{figure}

\begin{figure}
\centering
\begin{subfigure}[t]{0.19\textwidth}
\includegraphics[width=\textwidth]{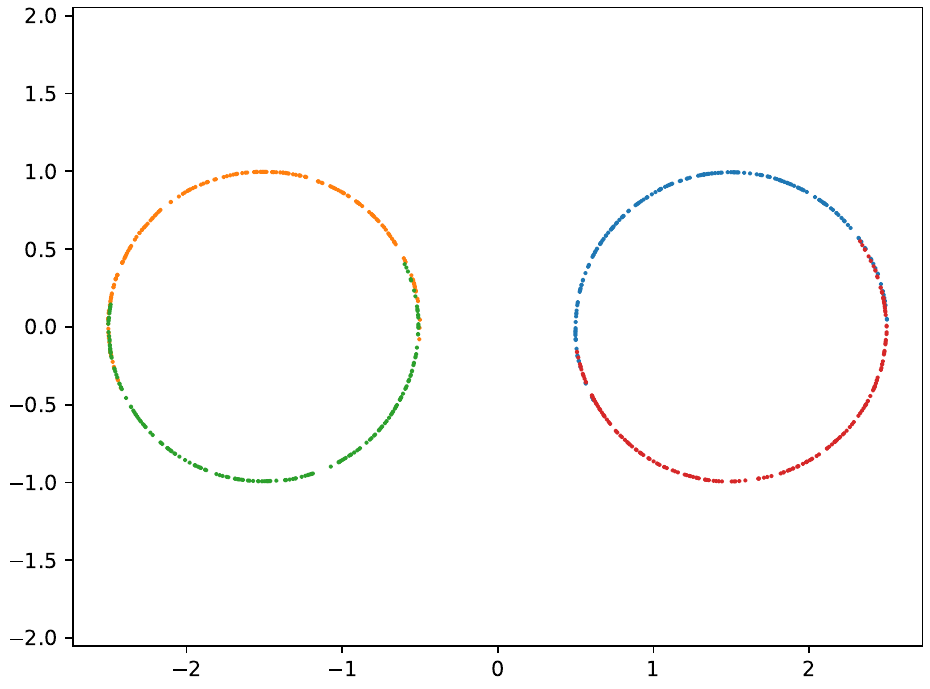}
\caption*{Two circles}
\end{subfigure}
\begin{subfigure}[t]{0.19\textwidth}
\includegraphics[width=\textwidth]{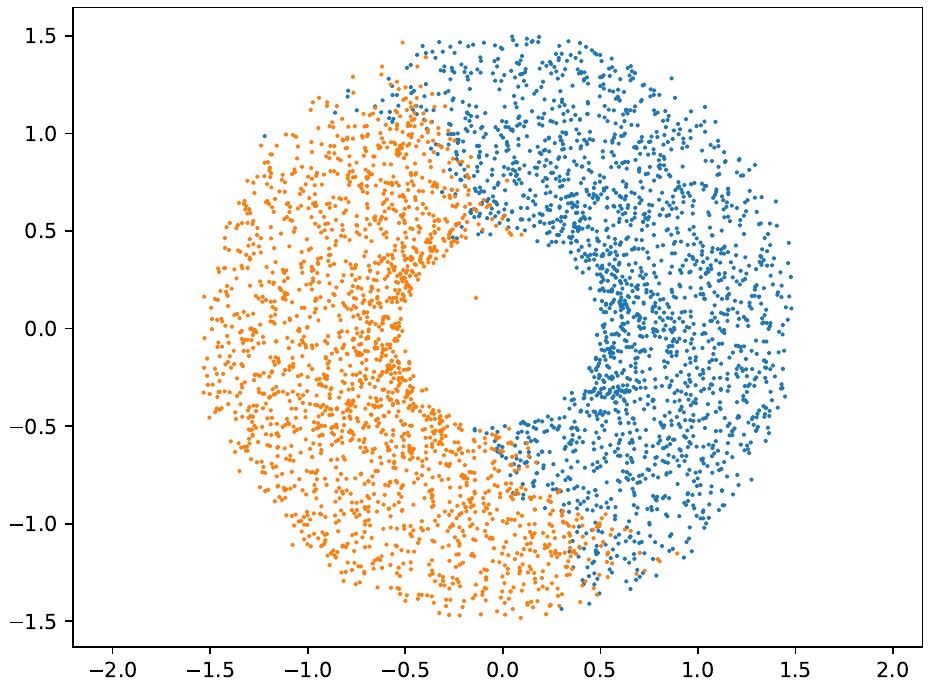}
\caption*{Ring}
\end{subfigure}
\begin{subfigure}[t]{0.19\textwidth}
\includegraphics[width=\textwidth]{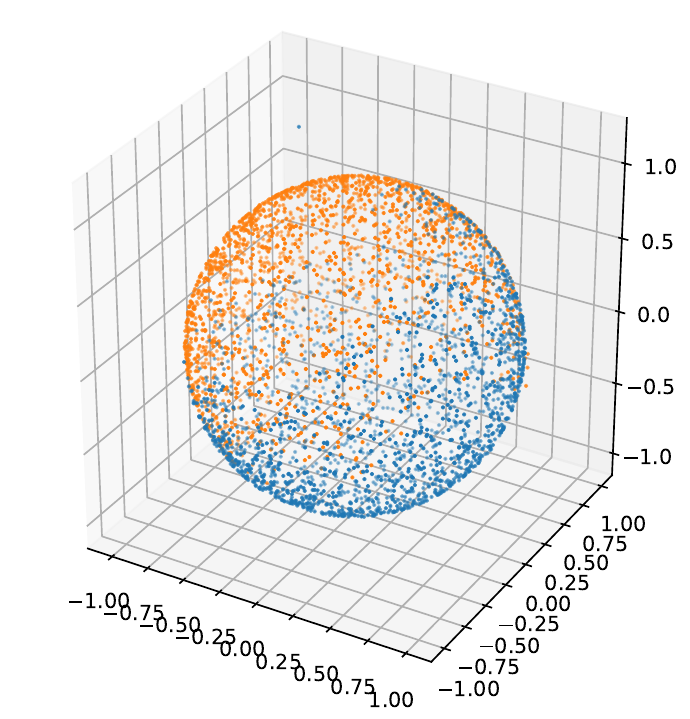}
\caption*{Sphere}
\end{subfigure}
\begin{subfigure}[t]{0.19\textwidth}
\includegraphics[width=\textwidth]{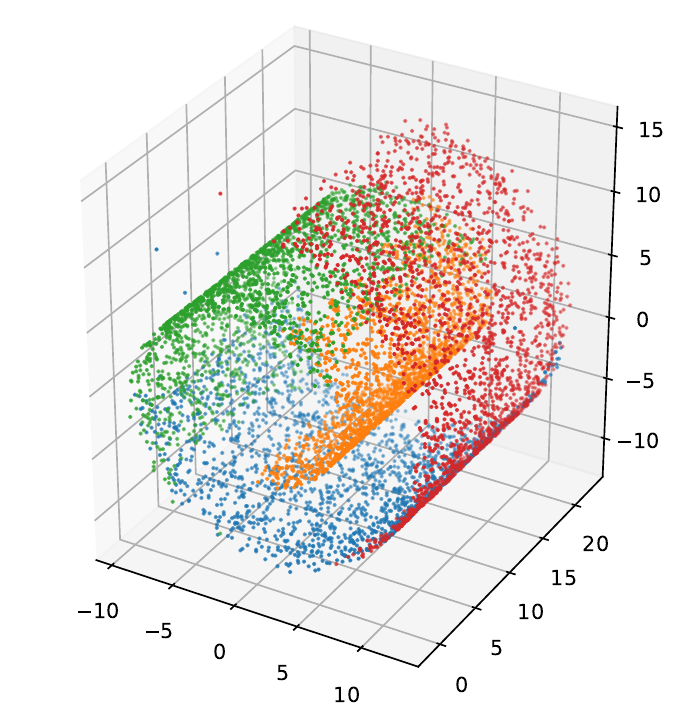}
\caption*{Swiss roll}
\end{subfigure}
\begin{subfigure}[t]{0.19\textwidth}
\includegraphics[width=\textwidth]{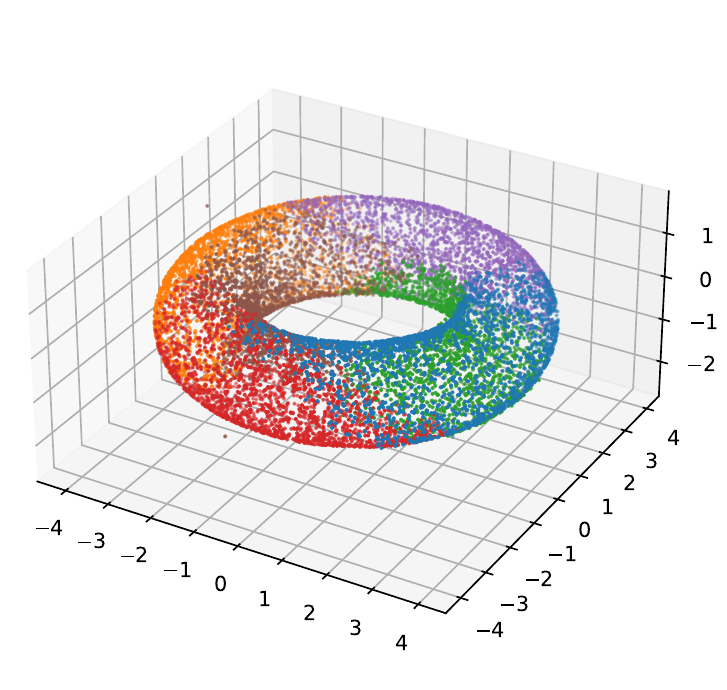}
\caption*{Torus}
\end{subfigure}
\caption{Generated samples by the learned mixture of VAEs. The color of a point indicates from which generator the point was sampled.}
\label{fig:generated_samples_toy}
\end{figure}

\begin{figure}
\centering
\begin{subfigure}[t]{0.24\textwidth}
\includegraphics[width=\textwidth]{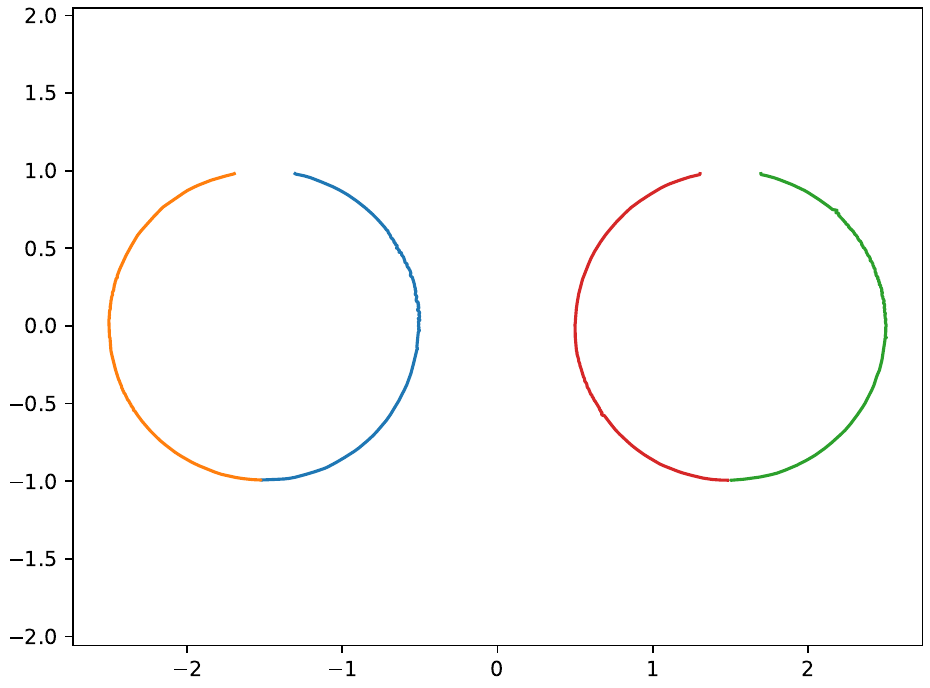}
\caption*{Two circles}
\end{subfigure}
\begin{subfigure}[t]{0.24\textwidth}
\includegraphics[width=\textwidth]{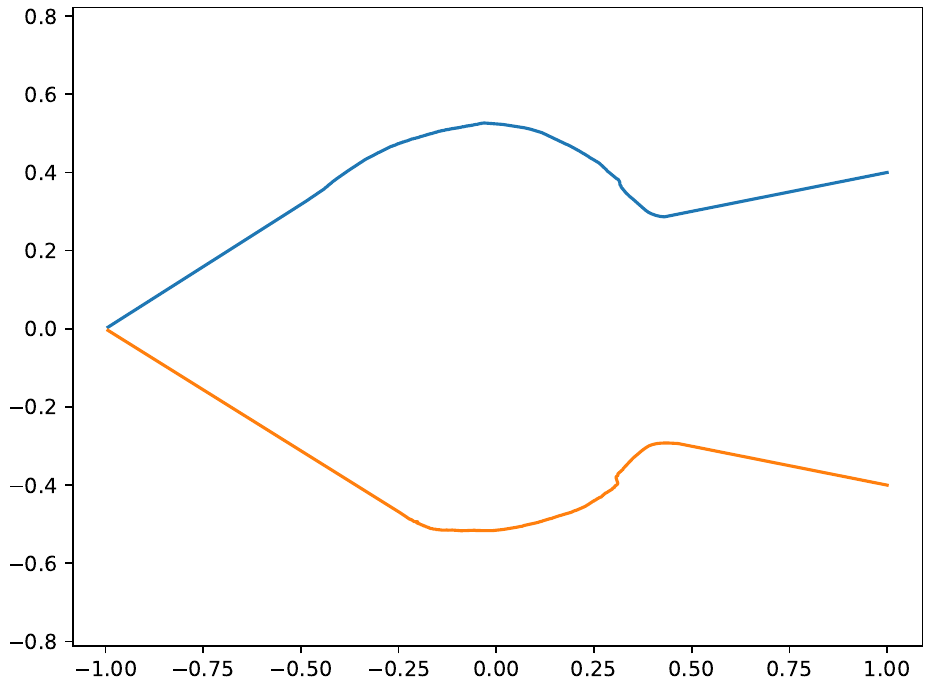}
\caption*{Ring}
\end{subfigure}
\begin{subfigure}[t]{0.24\textwidth}
\includegraphics[width=\textwidth]{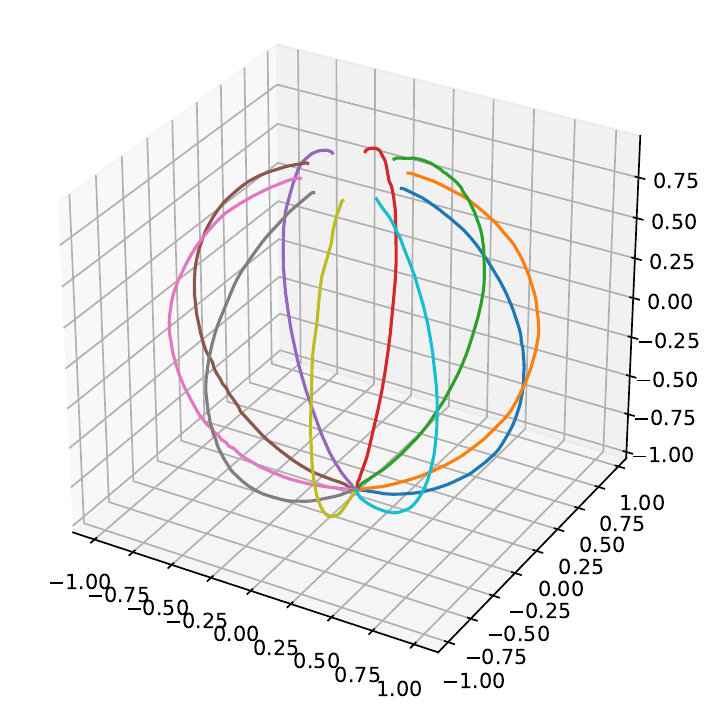}
\caption*{Sphere}
\end{subfigure}
\begin{subfigure}[t]{0.24\textwidth}
\includegraphics[width=\textwidth]{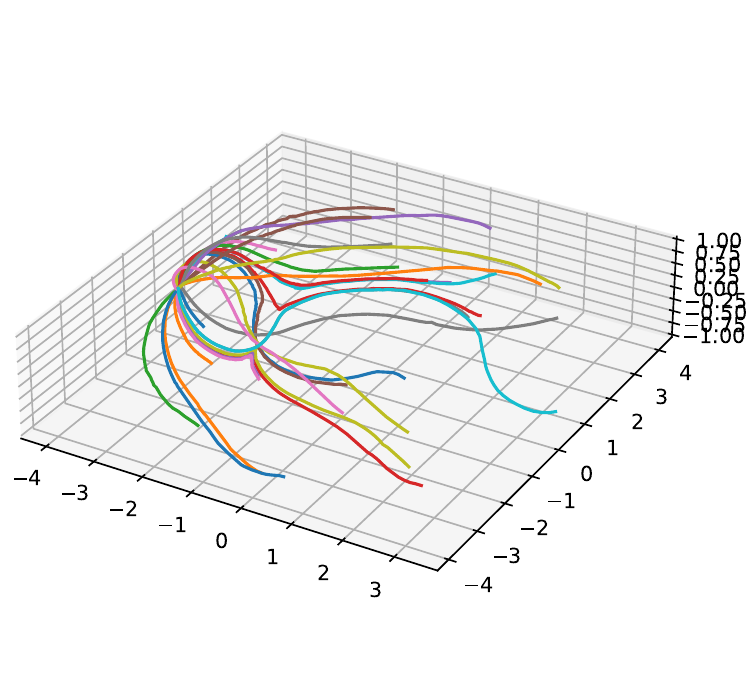}
\caption*{Torus}
\end{subfigure}
\caption{Trajectories of the gradient descent on the learned manifolds.}
\label{fig:trajectories_toy}
\end{figure}

\begin{remark}[Dimension of the Manifold]
For all our numerical experiments, we assume that the dimension $d$ of the data manifold is known. 
This assumption might be violated for practical applications.
However, there exist several methods in the literature to estimate the dimension of a manifold from data (see, e.g., \citealp{BSS2022,CS2016,FGQZ2010,LB2004}).
We are aware that dimension estimation of high dimensional data sets is a hard problem which cannot be considered as completely solved so far.
In particular, most of these algorithms make assumptions on the distribution of the given data points.
Even though it is an active area of research, it is not the scope of this paper to test, benchmark or develop such algorithms.
Similarly, combining them with our mixture of VAEs is left for future research.
\end{remark}

\section{Mixture of VAEs for Inverse Problems}\label{sec:inverse_problems}

In this section we describe how to use mixture of VAEs to solve inverse problems.
We consider an inverse problem of the form
\begin{equation}
\label{IP}
    y = \mathcal{G}(x)+\eta,
\end{equation}
where $\mathcal{G}$ is a possibly nonlinear map between $\R^n$ and $\R^m$, modelling a measurement (forward) operator, $x \in \R^n$ is a quantity to be recovered, $y \in \R^m$ is the noisy data and $\eta$ represents some noise. In particular, we analyze a linear and a nonlinear inverse problem: a deblurring problem and a parameter identification problem for an elliptic PDE arising in  electrical impedance tomography (EIT), respectively.

In many inverse problems, the unknown $x$ can be modeled as an element of a low-dimensional manifold $\mathcal M$ in $\R^n$ \citep{ASS2022,ASA2020,BAPD2017,HBLLS2021,OJMBDW2020,SKJLH2019,2022-massa-garbarino-benvenuto,2023-alberti-santacesaria}, and this manifold can be represented by the mixture of VAEs as explained in Section~\ref{sec:chart_learning}. Thus, the solution of \eqref{IP} can be found by optimizing the function 
\begin{equation}\label{eq:objective_function}
F(x) = \frac{1}{2} \| \mathcal{G}(x) - y \|^2_{\R^m}\quad\text{subject to }x\in\mathcal{M},
\end{equation}
by using the iterative scheme proposed in Section~\ref{sec:opt_prob}.

We would like to emphasize that the main goal of our experiments is not to obtain state-of-the-art results. 
Instead, we want to highlight the advantages of 
using multiple generators via a mixture of VAEs.
All our experiments are designed in such a way that the manifold property of the data is directly clear.
The application to real-world data and the combination with other methods in order to achieve competitive results are not within the scope
of this paper and are left to future research.

\paragraph{Architecture and Training.}
Throughout these experiments we consider images of size $128\times128$ and use the architecture from Section~\ref{sec:architectures} with $L=3$.
Starting with the latent dimension $d$, the mapping $A_1\colon\R^d\to\R^{32^2}$ fills up the input vector $x$ with zeros up to the size $32^2$, i.e., we set $A_1(x)=(x,0)$. The invertible neural network $T_1\colon\R^{32^2}\to\R^{32^2}$ consists of $3$ invertible blocks, where the subnetworks $s_{i}$ and $t_{i}$, $i=1,2$ are dense feed-forward networks with two hidden layers and $64$ neurons.
Afterwards, we reorder the dimensions to obtain an image of size $32\times32$.
The mappings $A_2\colon\R^{32\times 32}\to\R^{32\times32\times4}$ and $A_3\colon\R^{64\times64}\to\R^{64\times64\times 4}$ copy each channel $4$ times.
Then, the generalized inverses $A_2^\dagger\colon\R^{32\times32\times4}\to\R^{32\times32}$ and $A_3^\dagger\colon\R^{64\times64\times4}\to\R^{64\times64}$ from \eqref{eq_encoder} are given by taking the mean of the four channels of the input.
The invertible neural networks $T_{2}\colon\R^{32\times32\times4}\to\R^{64\times 64}$ and $T_3\colon\R^{64\times64\times 4}\to\R^{128\times128}$ consist of $3$ invertible blocks, where the subnetworks $s_i$ and $t_i$, $i=1,2$ are convolutional neural networks with one hidden layer and $64$ channels.
After these three coupling blocks we use an invertible upsampling \citep{EKS2020} to obtain the correct output dimension.
For the normalizing flow in the latent space, we use an invertible neural network with three blocks, where the subnetworks $s_{i}$ and $t_{i}$, $i=1,2$ are dense feed-forward networks with two hidden layers and $64$ neurons.

We train all the models for 200 epochs with the Adam optimizer. Afterwards we apply the overlapping procedure for $50$ epochs. See Algorithm~\ref{alg:training} for the details  of the training algorithm.

\subsection{Deblurring}

\begin{figure}[t]
\begin{subfigure}[t]{\textwidth}
\includegraphics[width=\textwidth]{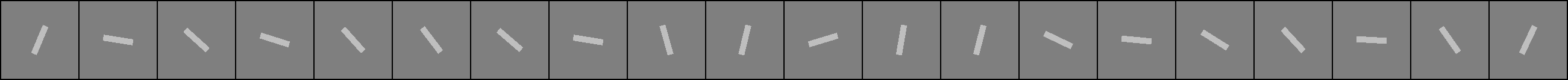}
\caption{Samples from the considered data set for the deblurring example.}
\label{fig:dataset_deblurring}
\end{subfigure}

\begin{subfigure}[t]{\textwidth}
\includegraphics[width=\textwidth]{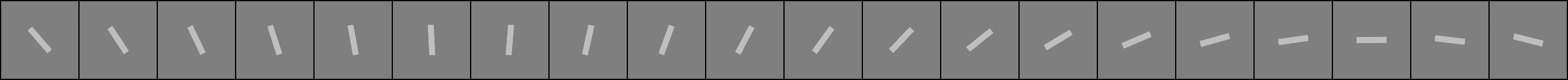}
\caption{Learned chart with one generator. The figure shows the images $D(x)$ for $20$ values of $x$ equispaced in $[-1,1]$.}
\label{fig:bar_chart_1_gen}
\end{subfigure}

\begin{subfigure}[t]{\textwidth}
\includegraphics[width=\textwidth]{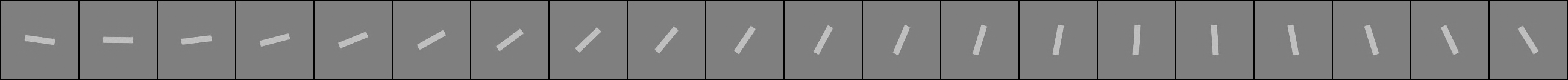}
\includegraphics[width=\textwidth]{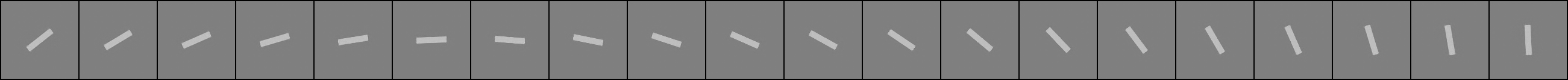}
\caption{Learned charts with two generators. The figure shows the images $D_k(x)$ for $20$ values of $x$ equispaced in $[-1,1]$ for $k=1$ (top) and $k=2$ (bottom).}
\label{fig:bar_chart_2_gen}
\end{subfigure}
\begin{subfigure}[t]{\textwidth}
\includegraphics[width=\textwidth]{imgs/bar_ds_grid}
\includegraphics[width=\textwidth]{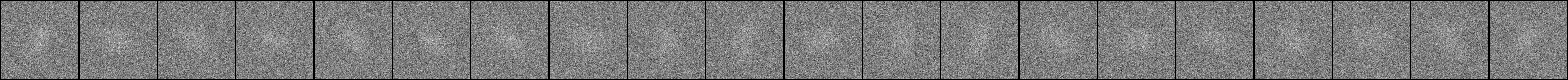}
\includegraphics[width=\textwidth]{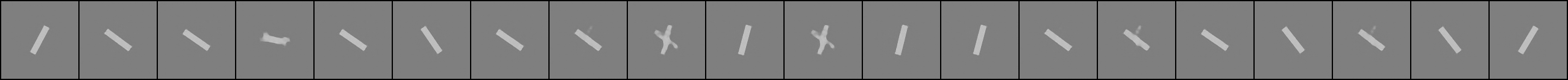}
\includegraphics[width=\textwidth]{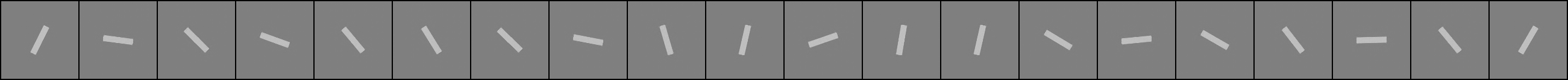}
\caption{Reconstructions for the deblurring example. From top to bottom: ground truth image, observation, reconstruction with one generator and reconstruction with two generators.}
\label{fig:deblurring_reconstructions}
\end{subfigure}
\caption{Data set, learned charts and reconstructions for the deblurring example.}
\end{figure}

\begin{figure}
\begin{subfigure}[t]{\textwidth}
\centering
\includegraphics[width=.05\textwidth]{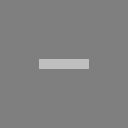}
\includegraphics[width=.05\textwidth]{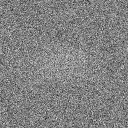}
\caption{Ground truth (left) and observation (right).}
\label{fig:traj_gt_obs}
\end{subfigure}
\begin{subfigure}[t]{\textwidth}
\includegraphics[width=\textwidth]{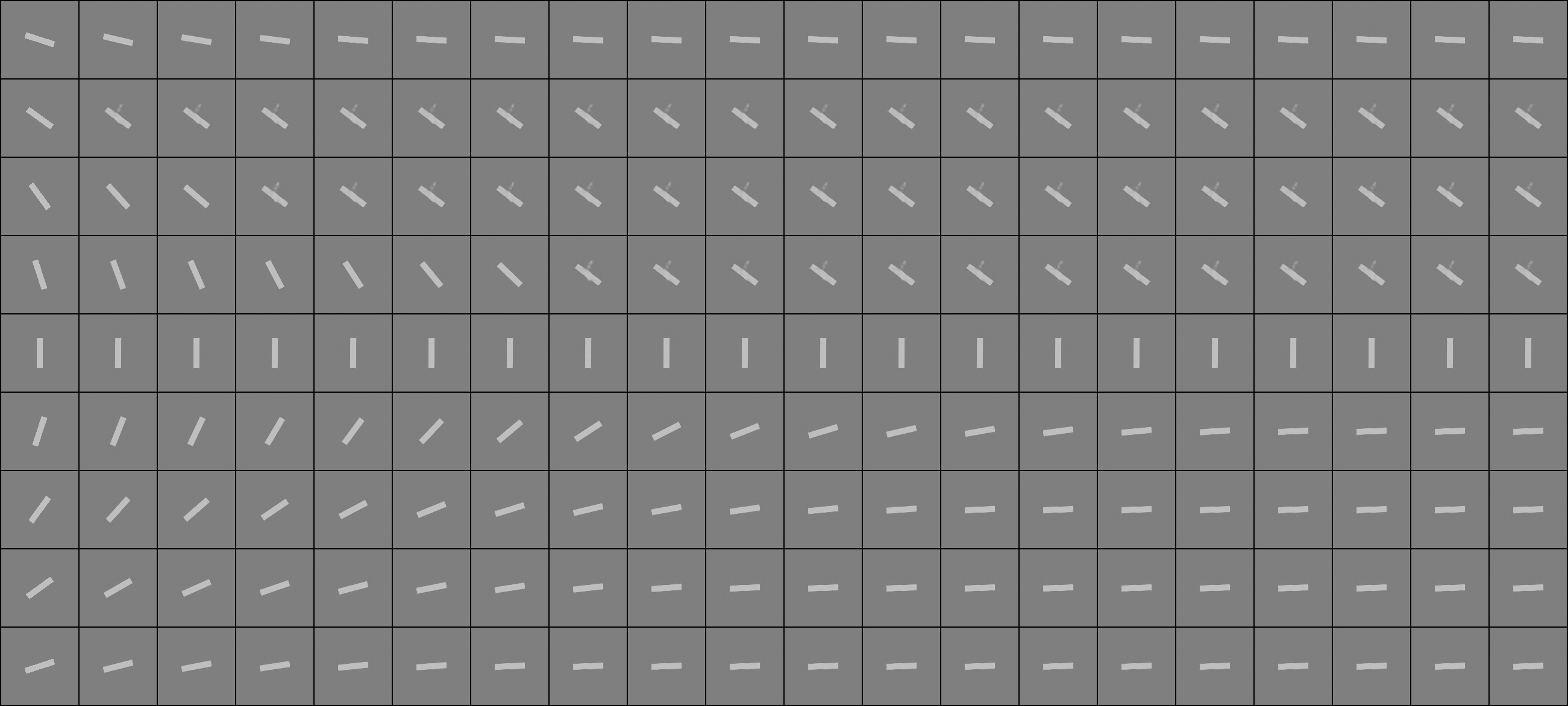}
\caption{Visualization of the trajectories $(x_n)_n$ for different initializations $x_0$ with one generator.
Left column: initialization, right column: reconstruction $x_{250}$, columns in between: images $x_n$ for $n$ approximately equispaced between $0$ and $250$.}
\label{fig:deblurring_one_chart}
\end{subfigure}
\begin{subfigure}[t]{\textwidth}
\includegraphics[width=\textwidth]{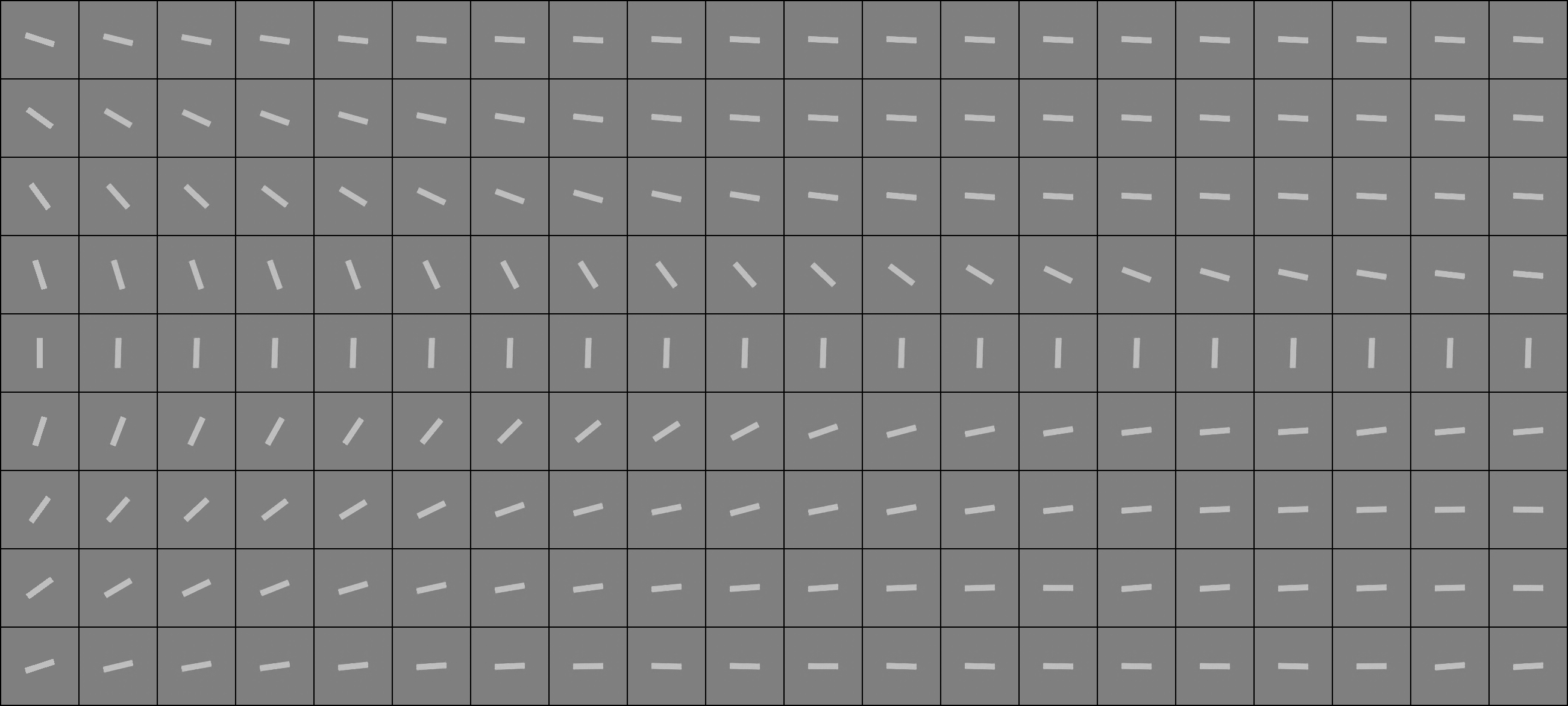}
\caption{Visualization of the trajectories $(x_n)_n$ for different initializations $x_0$ with two generators.
Left column: initialization, right column: reconstruction $x_{250}$, columns in between: images $x_n$ for $n$ approximately equispaced between $0$ and $250$.}
\label{fig:deblurring_two_charts}
\end{subfigure}
\caption{Gradient descent for the deblurring example.}
\end{figure}

First, we consider the inverse problem of noisy image deblurring. 
Here, the forward operator $\mathcal G$ in \eqref{IP} is linear and given by the convolution with a $30\times30$ 
Gaussian blur kernel with standard deviation $15$. 
In order to obtain outputs $y$ of the same size as the input $x$, we use constant padding with intensity $1/2$ within the convolution.
Moreover, the image is corrupted by white Gaussian noise $\eta$ with standard deviation $0.1$.
Given an observation $y$ generated by this degradation process, we aim to reconstruct the unknown ground truth image $x$.

\paragraph{Data set and Manifold Approximation.}

Here, we consider the data set of $128\times 128$ images showing a bright bar with a gray background that is centered and rotated. 
The intensity of fore- and background as well as the size of the bar are fixed.
Some example images from the data set are given in Figure~\ref{fig:dataset_deblurring}.
The data set forms a one-dimensional manifold parameterized by the rotation of the bar.
Therefore, it is homeomorphic to $S^1$ and does not admit a global parameterization since it contains a hole.

We approximate the data manifold by a mixture model of two VAEs and compare the result with the approximation with a single VAE, where the latent dimension is set to $d=1$. The learned charts are visualized in Figure~\ref{fig:bar_chart_1_gen} and \ref{fig:bar_chart_2_gen}.
We observe that the charts learned with a mixture of two VAEs can generate all possible rotations and overlap at their boundaries.
On the other hand, the chart learned with a single VAE does not cover all rotations but has a gap due to the injectivity of the decoder.
This gap is also represented in the final test loss of the model, which approximates the negative log likelihood of the test data.
It is given by $52.04$ for one generator and by $39.84$ for two generators.
Consequently, the model with two generators fits the data manifold much better.

\paragraph{Reconstruction.}

In order to reconstruct the ground truth image, we use our gradient descent scheme for the function \eqref{eq:objective_function} as outlined in Algorithm~\ref{alg:gd_manifold} for 500 iterations.
Since the function $F$ is defined on the whole $\R^{128\times128}$, we compute the Riemannian gradient $\nabla_{\mathcal M}F(x)$ accordingly to Remark~\ref{rem:riem_grad_ex}.
More precisely, for $x\in U_k$, we have $\nabla_{\mathcal M}F(x)= J (J^\tT J)^{-1} J^\tT\nabla F(x)$, where $\nabla F(x)$ is the Euclidean gradient of $F$ and $J=\nabla \mathcal D_k(\mathcal E_k(x))$ is the Jacobian of the $k$-th decoder evaluated at $\mathcal E_k(x)$.
Here, the Euclidean gradient $\nabla F(x)$ and the Jacobian matrix $J$ are computed by algorithmic differentiation.
Moreover, we use the retractions $\tilde R_{k,x}$ from \eqref{eq:retractions}.
As initialization $x_0$ of our gradient descent scheme, we use a random sample from the mixture of VAEs.
The results are visualized in Figure~\ref{fig:deblurring_reconstructions}.
We observe that the reconstructions with two generators always recover the ground truth images very well. 
On the other hand, the reconstructions with one generator often are unrealistic and do not match with the ground truth. 
These unrealistic images appear at exactly those points where the chart of the VAE with one generator does not cover the data manifold.

In order to better understand why the reconstructions with one generator often fail, we consider the trajectories $(x_n)_{n}$ generated by Algorithm~\ref{alg:gd_manifold} more in detail.
We consider a fixed ground truth image showing a horizontal bar and a corresponding observation as given in Figure~\ref{fig:traj_gt_obs}. Then, we run Algorithm~\ref{alg:gd_manifold} for different initializations. 
The results are given in Figure~\ref{fig:deblurring_one_chart} for one generator and in Figure~\ref{fig:deblurring_two_charts} for two generators. 
The left column shows the initialization $x_0$, and in the right column, there are the values $x_{250}$ after 250 gradient descent steps. 
The columns in between show the values $x_n$ for (approximately) equispaced $n$ between $0$ and $250$.
With two generators, the trajectory $(x_n)_n$ are a smooth transition from the initialization to the ground truth.
Only when the initialization is a vertical bar (middle row), the images $x_n$ remain similar to the initialization $x_0$ for all $n$, since this is a critical point of the $F|_{\mathcal M}$ and hence the Riemannian gradient is zero.
With one generator, we observe that some of the trajectories get stuck exactly at the gap, where the manifold is not covered by the chart.
At this point the latent representation of the corresponding image would have to jump, which is not possible. Therefore, a second generator is required here.

\subsection{Electrical Impedance Tomography}

\begin{figure}[t]
\begin{subfigure}[t]{\textwidth}
\includegraphics[width=\textwidth]{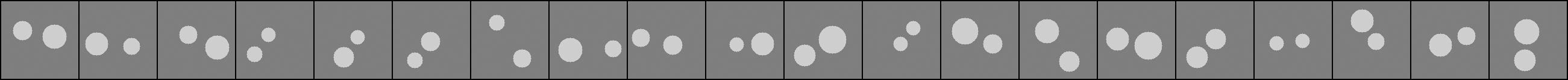}
\caption{Samples from the considered data set for the EIT example.}
\label{fig:dataset_EIT}
\end{subfigure}
\begin{subfigure}[t]{\textwidth}
\includegraphics[width=\textwidth]{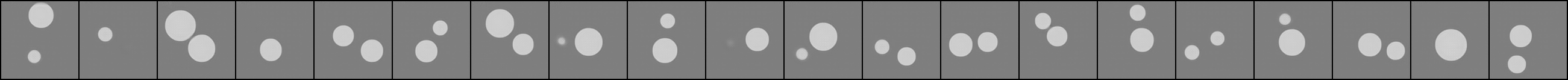}
\caption{Samples from the VAE with one generator.}
\label{fig:balls_chart_1_gen}
\end{subfigure}
\begin{subfigure}[t]{\textwidth}
\includegraphics[width=\textwidth]{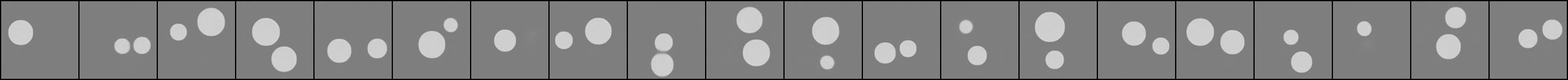}
\includegraphics[width=\textwidth]{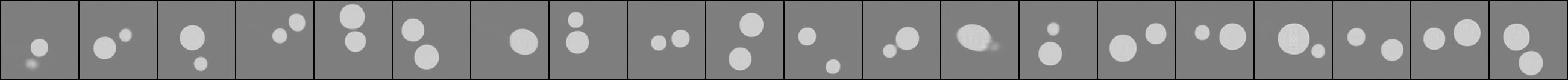}
\caption{Samples from the mixture of two VAEs. Top: first chart, bottom: second chart.}
\label{fig:balls_chart_2_gen}
\end{subfigure}
\begin{subfigure}[t]{\textwidth}
\includegraphics[width=\textwidth]{imgs/balls_ds_grid}
\includegraphics[width=\textwidth]{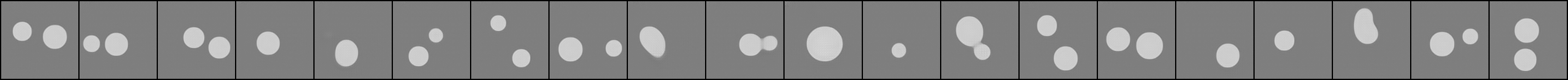}
\includegraphics[width=\textwidth]{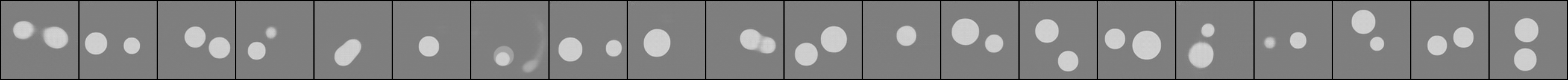}
\caption{Reconstructions. From top to bottom: ground truth image, reconstruction with one generator, reconstruction with two generators.}
\label{fig:balls_reconstructions}
\end{subfigure}
\caption{Data set, synthesized samples and reconstructions for the EIT example.}
\end{figure}

Finally, we consider the highly non-linear and ill-posed inverse problem of electrical impedance tomography (EIT, \citealp{cheney1999electrical}), which is also known in the mathematical literature as the Calder\'{o}n problem \citep{AP2006,FSU2019,MS2012}. 
EIT is a non-invasive, radiation-free method to measure the conductivity of a tissue through electrodes placed on the surface of the body. 
More precisely, electrical currents patterns are imposed on some of these electrodes and the resulting voltage differences are measured on the remaining ones. 
Although harmless, the use of this modality in practice is very limited because the standard reconstruction methods provide images with very low spatial resolution. This is an immediate consequence of the severe ill-posedness of the inverse problem \citep{alessandrini1988,mandache2001}.

Classical methods for solving this inverse problem include variational-type methods \citep{CINSG1990}, the Lagrangian method \citep{CZ1999}, the factorization method \citep{BH2000,KG2007}, the D-bar method \citep{SMI2000}, the enclosure method \citep{IS2000}, and the monotonicity method \citep{TR2002}.
Similarly as many other inverse problems,  deep learning methods have had a big impact on EIT.
For example, \citet{FY2020} propose an end-to-end neural network that learns the forward map $\mathcal G$ and its inverse.
Moreover, deep learning approaches can be combined with classical methods, e.g, by post processing methods \citep{HH2018,HHHK2019} or by variational learning algorithms \citep{SKJLH2019}.

\paragraph{Data set and Manifold Approximation.}
We consider the manifold consisting of $128\times 128$ images showing two bright non-overlapping balls with a gray background, representing conductivities with special inclusions.
The radius and the position of the balls vary, while the fore- and background intensities are fixed. 
Some exemplary samples of the data set are given in Figure~\ref{fig:dataset_EIT}.
\begin{remark}[Dimension and topology of the data manifold]
Since the balls are indistinguishable and not allowed to overlap, an image can be uniquely described by the angle between the two balls, the midpoint between both balls, their distance and the two radii.
Hence, the data manifold is homeomorphic to $\mathbb S^1\times (0,1)^2\times (0,1)\times (0,1)^2=\mathbb S^1\times (0,1)^5$. In particular, it contains a hole and does not admit a global parameterization.
\end{remark}

A slightly more general version of this manifold was considered by \citet{2023-alberti-santacesaria}, where Lipschitz stability is proven for a related inverse boundary value problem restricted to the manifold. Other types of inclusions (with unknown locations), notably polygonal and polyhedral inclusions, have been considered in the literature \citep{2021-beretta-francini-vessella,2022-beretta-francini,2022-aspri-beretta-francini-vessella}. The case of small inclusions is discussed by \citet{2004-ammari-kang}. 

We approximate the data manifold by a mixture of two VAEs and compare the results with the approximation with a single VAE. The latent dimension is set to the manifold dimension, i.e., $d=6$.
Some samples of the learned charts are given in Figure~\ref{fig:balls_chart_1_gen} and \ref{fig:balls_chart_2_gen}.
As in the previous example, both models produce mostly realistic samples.
The test loss is given by $365.21$ for one generator and by $229.99$ for two generators.
Since the test loss approximates the negative log likelihood value of the test data, this indicates that two generators are needed in order to cover the whole data manifold.

\paragraph{The Forward Operator and its Derivative.} From a mathematical viewpoint, EIT considers the following PDE with Neumann boundary conditions
\begin{equation}
\label{u_g}
\left \{ \begin{array}{rl}
- \nabla \cdot (\gamma \hspace{0.1cm} \nabla u_g) = 0& \text{in } \Omega, \\
\gamma \hspace{0.1cm} \partial_{\nu} u_g = g & \text{on } \partial \Omega,
\end{array}
\right.
\end{equation}
where $\Omega\subseteq\R^2$ is a bounded domain, $\gamma \in L^{\infty}(\Omega)$ is such that $\gamma(x) \geq \lambda > 0$ and $u_g\in H^1(\Omega)$ is the unique weak solution with zero boundary mean of \eqref{u_g} with Neumann boundary data \smash{$g \in H^{-\frac{1}{2}}_\diamond(\partial \Omega)$}, with $H^s_\diamond(\partial \Omega) = \{ f \in H^s(\partial \Omega) : \int_{\partial \Omega} f ds = 0 \}$.
From the physical point of view, $g$ represents the electric current applied on $\partial \Omega$ (through electrodes placed on the boundary of the body), $u_g$ is the electric potential and  $\gamma$ is the conductivity of the body in the whole domain $\Omega$. The inverse problem consists in the reconstruction of $\gamma$ from the knowledge of all pairs of boundary measurements $(g,u_g|_{\partial\Omega})$, namely, of all injected currents together with the corresponding electric voltages generated at the boundary. In a compact form, the measurements may be modelled by the Neumann-to-Dirichlet map 
\[
\begin{aligned}
\mathcal{G}(\gamma) \colon H^{-\frac{1}{2}}_\diamond(\partial \Omega) &\to H^{\frac{1}{2}}_\diamond(\partial \Omega)\\
g &\mapsto u_g \big|_{\partial \Omega}.
\end{aligned}
\]
Since the PDE \eqref{u_g} is linear,  the map $\mathcal{G}(\gamma) $ is linear. However, the forward map $\gamma\mapsto\mathcal{G}(\gamma) $ is nonlinear in $\gamma$, and so is the corresponding inverse problem.
The map $\mathcal{G}$ is continuously differentiable, and its Fr\'{e}chet derivative (see \citealp{H2019}) is given by $[\nabla \mathcal{G}(\gamma)](\sigma)(g) = w_g \big|_{\partial \Omega}$, where $w_g \in H^1(\Omega)$ is the unique weak solution with zero boundary mean of 
\begin{equation*}
\left \{ \begin{array}{rl}
- \nabla \cdot (\gamma \hspace{0.1cm} \nabla w_g) = \nabla \cdot (\sigma \hspace{0.1cm} \nabla u_g)& \text{in } \Omega, \\
- \gamma \hspace{0.1cm} \partial_{\nu} w_g = \sigma \hspace{0.1cm} \partial_{\nu} u_g & \text{on } \partial \Omega,
\end{array}
\right.
\end{equation*}
where $u_g \in H^1(\Omega)$ is the unique weak solution with zero boundary mean that solves \eqref{u_g}. We included the expression of this derivative in the continuous setting for completeness, but, as a matter of fact, we will need only its semi-discrete counterpart given below.

\paragraph{Discretization and Objective Function.}
In our implementations, we discretize the linear mappings $G(\gamma)$ by restricting them to a finite dimensional subspace spanned by 
a-priori fixed boundary functions $g_1,\dots,g_N\in H^{-\frac{1}{2}}_\diamond(\partial \Omega)$.
Then, following \cite[eqt.~(2.2)]{BMPS2018}, we reconstruct the conductivity  by minimizing the semi-discrete functional 
\begin{equation}
\label{J}
    F(\gamma) = \frac{1}{2} \sum_{n = 1}^N \int_{\partial \Omega} | u_{g_n}(s) - (u_{\text{true}})_{g_n}(s)|^2 ds,
\end{equation} 
where $(u_{\text{true}})_{g_n}$ is the observed data.
In our discrete setting, we represent the conductivitiy $\gamma$ by a piecewise constant function $\gamma = \sum_{m=1}^M \gamma_m \mathbbm{1}_{T_m}$ on a triangulation $(T_m)_{m=1,\dots,M}$. Then, following Equation (2.20) from \citet{BMPS2018}, the derivative of \eqref{J} with respect to $\gamma$ is given by \begin{equation}\label{eq:derivative_F_EIT}
\frac{dF}{d\gamma_m}(\gamma) = \sum_{n=1}^N \int_{T_m} \nabla u_{g_n}(x) \cdot \nabla z_{g_n}(x) dx,  
\end{equation}
where $z_{g_n}$ solves 
\begin{equation}
\label{z_g_n}
  \left \{ \begin{array}{rl}
- \nabla \cdot (\gamma \hspace{0.1cm} \nabla z_{g_n}) = 0\phantom{(u_{\text{true}})_{g_n} - u_{g_n}}& \text{in } \Omega, \\
\gamma \hspace{0.1cm} \partial_{\nu} z_{g_n} = (u_{\text{true}})_{g_n} - u_{g_n}\phantom{0}& \text{on } \partial \Omega,
\end{array}
\right.  
\end{equation}
with the normalization $\int_{\partial \Omega} z_{g_n}(s) ds = \int_{\partial \Omega} (u_{\text{true}})_{g_n}(s) ds$.

\paragraph{Implementation Details.} In our experiments the domain $\Omega$ is given by the unit square $[0,1]^2$. For solving the PDEs \eqref{u_g} and \eqref{z_g_n}, we use a finite element solver from the DOLFIN library \citep{LW2010}. 
We employ meshes that are coarser in the middle of $\Omega$ and finer close to the boundary.
To simulate the approximation error of the meshes, and to avoid inverse crimes, we use a fine mesh to generate the observation and a coarser one for the reconstructions.
We use $N=15$ boundary functions, which are chosen as follows.
We divide each of the four edges of the unit square $[0,1]^2$ into 4 segments and denote by $b_1,\dots,b_{16}$ the functions that are equal to 
$1$ on one of these segments and $0$ otherwise. 
Then, we define the boundary functions as $g_n=\sum_{i=1}^{16}a_{n,i}b_i$, where the matrix $A=(a_{n,i})_{n=1,\dots,15,i=1,\dots,16}$ is the $16\times16$ Haar matrix without the first row.
More precisely, the rows of $A$ are given by the rows of the 
matrices $2^{-k/2}(\mathrm{Id}_{2^{4-k}}\otimes (1,-1) \otimes e_{2^{k-1}}^\tT)$ for $k=1,\dots,4$, where $\otimes$ is the Kronecker product and $e_j\in\R^j$ is the vector where all entries are $1$.

\paragraph{Results.}

We reconstruct the ground truth images from the observations by minimizing the functional $F$ from \eqref{J} subject to $\gamma\in\mathcal M$.
To this end, we apply the gradient descent scheme from Algorithm~\ref{alg:gd_manifold} for 100 steps. Since the evaluation of the forward operator and 
its derivative include the numerical solution of a PDE, it is computationally very costly. Hence, we aim to use as few iterations of Algorithm~\ref{alg:gd_manifold} as possible. To this end, we apply the adaptive step size scheme from Algorithm~\ref{alg:adaptive_steps}.
As retractions we use $\tilde R_{k,x}$ from \eqref{eq:retractions}.
The initialization $\gamma_0$ of the gradient descent scheme is given by a random sample from the mixture of VAEs.

Since $F$ is defined on the whole $\R_+^{128\times128}$, we use again Remark~\ref{rem:riem_grad_ex} for the evaluation of the Riemannian gradient.
More precisely, for $\gamma\in U_k$, we have that $\nabla_{\mathcal M} F(\gamma)=J(J^\tT J)^{-1}J^\tT\nabla F(\gamma)$, where $\nabla F(\gamma)$ is the Euclidean gradient of $F$ and $J=\nabla \mathcal D_k(\mathcal E_k(\gamma))$.  
Here, we compute $\nabla F(\gamma)$  by \eqref{eq:derivative_F_EIT} and $J$ by algorithmic differentiation.

The reconstructions for 20 different ground truths are visualized in Figure~\ref{fig:balls_reconstructions}.
We observe that both models capture the ground truth structure in most cases, but also fail sometimes. 
Nevertheless, the reconstructions with the mixture of two VAEs recover the correct structure more often and more accurately than the single VAE, which can be explained by the better coverage of the data manifold.
To quantify the difference more in detail, we rerun the experiment with $200$ different ground truth images and compare the results with one and two generators using the PSNR and SSIM.
As an additional evaluation metric, we run the segmentation algorithm proposed by \citet{O1979} on the ground truth and reconstruction image and compare the resulting segmentation masks using the SSIM.
The resulting values are given in the following table.
\begin{center}
\begin{tabular}{c|cc}
&One generator&Two generators\\\hline
PSNR&$23.64\pm3.91$&$24.76\pm3.79$\\
SSIM&$0.8951\pm 0.0377$&$0.9111\pm 0.0368$\\
segment+SSIM&$0.8498\pm0.0626$&$0.8667\pm0.0614$
\end{tabular}
\end{center}
Consequently, the reconstructions with two generators are significantly better than those with one generator for all evaluation metrics.

\section{Conclusions}\label{sec:conclusions}

In this paper we introduced mixture models of VAEs for learning manifolds of arbitrary topology.
The corresponding decoders and encoders of the VAEs provide analytic access to the resulting charts
and are learned by a loss function that approximates the negative log-likelihood function. 
For minimizing functions $F$ defined on the learned manifold we proposed a Riemannian gradient descent scheme.
In the case of inverse problems, $F$ is chosen as a data-fidelity term.
Finally, we demonstrated the advantages of using several generators on numerical examples.

This work can be extended in several directions.
First, gradient descent methods converge only locally and are not necessarily fast. Therefore,
it would be interesting to extend the minimization of the functional $F$ in Section~\ref{opt_prob} to higher-order methods 
or incorporate momentum parameters.
Moreover, a careful choice of the initialization could improve the convergence behavior.
Further, our reconstruction method could be extended to Bayesian inverse problems.
Since the mixture model of VAEs provides us with a probability distribution and an (approximate) density,
 stochastic sampling methods like the Langevin dynamics could be used for quantifying uncertainties within our reconstructions.
Indeed, Langevin dynamics on Riemannian manifolds are still an active area of research.
Further, for large numbers $K$ of charts the mixture of VAEs might have a considerable number of parameters.
As a remedy, we could incorporate the selection of the chart as conditioning parameter in one conditional decoder-encoder pair (see \citealp{SLY2015} as a reference for conditional VAEs).
Finally, recent papers show that diffusion models provide an implicit representation of the data manifold \citep{BSS2022,RLCC2022}.
It would be interesting to investigate optimization models on such manifolds in order to apply them to inverse problems.


\acks{This material is based upon work supported by the Air Force Office of Scientific Research under award numbers FA8655-20-1-7027 and FA8655-23-1-7083. We acknowledge the support of Fondazione Compagnia di San Paolo. Co-funded by the European Union (ERC, SAMPDE, 101041040). Views and opinions expressed are however those of the authors only and do not necessarily reflect those of the European Union or the European Research Council. Neither the European Union nor the granting authority can be held responsible for them. GSA, MS and SS are members of the ``Gruppo Nazionale per l’Analisi Matematica, la Probabilità e le loro Applicazioni'', of the ``Istituto Nazionale di Alta Matematica''. The research of GSA, MS and SS was supported in part by the MUR Excellence Department Project awarded to the Department of Mathematics, University of Genoa, CUP D33C23001110001.
JH acknowledges funding by the German Research Foundation (DFG) within the project STE 571/16-1 and by the EPSRC programme grant ``The Mathematics of Deep Learning'' with reference EP/V026259/1.}

\appendix

\section{Derivation of the ELBO}\label{app:ELBO}

By Jensen's inequality the evidence can be lower-bounded by
\begin{equation}\label{eq:long_ELBO_derivation}
\begin{aligned}
\log(p_{\tilde X}(x))&=\log\Big(\int_{\R^d} p_{Z,\tilde X}(z,x) \dx z\Big)
\\&=\log\Big(\int_{\R^d} \frac{p_{Z,\tilde X}(z,x)}{p_{\tilde Z|X=x}(z)}p_{\tilde Z|X=x}(z) \dx z\Big)\\
&\geq \int_{\R^d} \log\Big(\frac{p_{Z,\tilde X}(z,x)}{p_{\tilde Z|X=x}(z)}\Big)p_{\tilde Z|X=x}(z) \dx z\Big)\\
&=\E_{z\sim P_{\tilde Z|X=x}}\Big[\log\Big(\frac{p_{Z}(z)p_{\tilde X|Z=z}(x)}{p_{\tilde Z|X=x}(z)}\Big)\Big]\\
&=\E_{z\sim P_{\tilde Z|X=x}}[\log(p_Z(z))+\log(p_{\tilde X|Z=z}(x))-\log(p_{\tilde Z|X=x}(z))].
\end{aligned}
\end{equation}
Accordingly to the definition of $\tilde Z$ and $\tilde X$, we have that $p_{\tilde X|Z=z}(x)=\mathcal N(x;D(z),\sigma_x^2 I_n)$ and $p_{\tilde Z|X=x}(z)=\mathcal N(z;E(x),\sigma_z^2 I_d)$.
Thus, the above formula is, up to a constant, equal to
\begin{align}
\E_{z\sim P_{\tilde Z|X=x}}[\log(p_Z(z))-\tfrac1{2\sigma_x^2}\|x-D(z)\|^2-\log(\mathcal N(z;E(x),\sigma_z^2 I_d))].
\end{align}
Considering the substitution $\xi=(z-E(x))/\sigma_z$, we obtain
\begin{align}
\E_{\xi\sim\mathcal N(0,I_d)}[\log(p_Z(E(x)+\sigma_z\xi))-\tfrac1{2\sigma_x^2}\|D(E(x)+\sigma_z\xi)-x\|^2
-\log(\mathcal N(\xi;0,I_d))].
\end{align}
Note that also the last summand does not depend on $D$ and $E$. Thus, we obtain, up to a constant, the \emph{evidence lower bound} (ELBO) given by
\begin{equation}
\ELBO(x|\theta)\coloneqq \E_{\xi\sim\mathcal N(0,I_d)}[\log(p_Z(E(x)+\sigma_z\xi))-\tfrac1{2\sigma_x^2}\|D(E(x)+\sigma_z\xi)-x\|^2].
\end{equation}

\section{Error Bound for the Approximation of $\beta_{ik}$ by $\tilde \beta_{ik}$}\label{app:Lipschitz_bound}

It is well-known that the difference between evidence and ELBO can be expressed in terms of the Kullback-Leibler divergence (see \citealp[Sec 2.2]{KW2019}). In the special case that $E\circ D=\mathrm{Id}$, which is relevant in this paper, this estimate can be simplified by the following lemma. To this end, denote by
$$
\mathcal F(x|\theta)\coloneqq\E_{z\sim P_{\tilde Z|X=x}}[\log(p_Z(z))+\log(p_{\tilde X|Z=z}(x))-\log(p_{\tilde Z|X=x}(z))]=\ELBO(x|\theta)+\mathrm{const}
$$
the ELBO before leaving out the constants.

\begin{lemma}\label{lem:ELBO_evidence_KL}
Assume that $E\circ D=\mathrm{Id}$. Then, it holds
$$
\log(p_{\tilde X}(x))-\mathcal F(x|\theta)=\mathrm{KL}(\mathcal N(E(x),\sigma_z^2I_d),E_\#\mathcal N(x,\sigma_x^2I_n)).
$$
\end{lemma}
\begin{proof}
By Equation (2.8) in \citet{KW2019}, it holds that
$$
\log(p_{\tilde X}(x))-\mathcal F(x|\theta)=\mathrm{KL}(P_{\tilde Z|X=x},P_{Z|\tilde X=x}).
$$
Inserting the definitions $\tilde Z=X+\xi$ and $Z=E(D(Z))=E(D(Z)+\eta-\eta)=E(\tilde X-\eta)$, this is equal to
$$
\mathrm{KL}(P_{E(X)+\xi|X=x},P_{E(\tilde X-\eta)|\tilde X=x})=\mathrm{KL}(P_{E(x)+\xi},P_{E(x-\eta)})=\mathrm{KL}(P_{E(x)+\xi},E_\#P_{x-\eta}).
$$
Using $\xi\sim\mathcal N(0,\sigma_z^2 I_d)$ and $\eta=\mathcal N(0,\sigma_x^2I_n)$, we arrive at the assertion.
\end{proof}

The next two lemmas exploit this estimate for bounding the approximation error between $\tilde \beta_{ik}$ and $\beta_{ik}$.

\begin{lemma}
Assume that $\exp(\mathrm{KL}(\mathcal N(E_k(x_i),\sigma_z^2I_d),{E_k}_\#\mathcal N(x_i,\sigma_x^2I_n)))\leq L$ for all $x_i$, $k$. Then
$$
\frac1L\beta_{ik}\leq\tilde\beta_{ik}\leq L\beta_{ik},
$$
where $\beta_{ik}$ and $\tilde \beta_{ik}$ are defined in \eqref{eq_true_betas} and \eqref{eq_approx_betas}.
\end{lemma}
\begin{proof}
Due to Lemma~\ref{lem:ELBO_evidence_KL}, we have that
$$
1\leq\frac{p_{\tilde X_k}(x_i)}{\exp(\mathcal F(x_i|\theta_k))}\leq\exp(\mathrm{KL}(\mathcal N(E_k(x_i),\sigma_z^2I_d),{E_k}_\#\mathcal N(x_i,\sigma_x^2I_n)))\leq L,
$$
i.e.,
$$
\frac1L p_{\tilde X_k}(x_i)\leq\exp(\mathcal F(x_i|\theta_k))\leq p_{\tilde X_k}(x_i).
$$
Since by $\ELBO(x_i|\theta_k)=\mathcal F(x_i|\theta_k)+\mathrm{const}$, it holds that $\tilde \beta_{ik}=\tfrac{\alpha_k \exp(\mathcal F(x_i|\theta_k))}{\sum_{j=1}^K \alpha_j \exp(\mathcal F(x_i|\theta_j))}$, this implies that
$$
\frac1L\beta_{ik}=\tfrac{\frac1L\alpha_k p_{\tilde X_k}(x_i)}{\sum_{j=1}^K \alpha_j p_{\tilde X_k}(x_i)}\leq\underbrace{\tfrac{\alpha_k \exp(\mathcal F(x_i|\theta_k))}{\sum_{j=1}^K \alpha_j \exp(\mathcal F(x_i|\theta_j))}}_{=\tilde\beta_{ik}}\leq \tfrac{\alpha_k p_{\tilde X_k}(x_i)}{\frac1L\sum_{j=1}^K \alpha_j p_{\tilde X_k}(x_i)}=L\beta_{ik},
$$
which concludes the proof.
\end{proof}

\begin{lemma}
Let $E=A\circ T$ where $A\colon\R^n\to\R^d$ is given by the matrix $A=(I_d|0)$ and $T$ is invertible and bi-Lipschitz with Lipschitz constants $L_1$ and $L_2$ for $T$ and $T^{-1}$.
Then
$$
\mathrm{KL}(\mathcal N(E(x),\sigma_z^2I_d),{E}_\#\mathcal N(x,\sigma_x^2I_n))\leq \log(L_1^n L_2^n)+\frac{d}2(\tfrac{L_2^2\sigma_z^2}{\sigma_x^2}-1-\log(\tfrac{L_2^2\sigma_z^2}{\sigma_x^2})).
$$
\end{lemma}
\begin{proof}
We estimate the density $p_{E_\#\mathcal N(x,\sigma_x^2I_n))}(z)$ from below.
By \cite[Lemma 4]{ADHHMS2022}, we have that 
$$
p_{T_\#\mathcal N(x,\sigma_x^2I_n)}\geq \frac{1}{L_1^nL_2^n}\mathcal N(T(x),\frac{\sigma_x}{L_2^2}I_n).
$$
Using the projection property of Gaussian distributions, this implies that $E_\#\mathcal N(x,\sigma_x^2I_n))=A_\#(T_\#\mathcal N(x,\sigma_x^2I_n))$ fulfills
$$
p_{E_\#\mathcal N(x,\sigma_x^2I_n))}\geq\frac{1}{L_1^nL_2^n}\mathcal N(E(x),\frac{\sigma_x}{L_2^2}I_d).
$$
Hence, by the definition of the Kullback-Leibler divergence, it holds
\begin{align}
&\quad\mathrm{KL}(\mathcal N(E(x),\sigma_z^2I_d),{E}_\#\mathcal N(x,\sigma_x^2I_n))\\
&=\E_{z\sim\mathcal N(E(x),\sigma_z^2I_d)}\Big[\log\Big(\frac{\mathcal N(z|E(x),\sigma_z^2I_d)}{p_{E_\#\mathcal N(x,\sigma_x^2I_n))}(z)}\Big)\Big]\\
&\leq\log(L_1^n L_2^n)+\E_{z\sim\mathcal N(E(x),\sigma_z^2I_d)}\Big[\log\Big(\frac{\mathcal N(z|E(x),\sigma_z^2I_d)}{\mathcal N(z|E(x),\frac{\sigma_x^2}{L_2^2}I_d)}\Big)\Big]\\
&=\log(L_1^n L_2^n)+\mathrm{KL}(\mathcal N(E(x),\sigma_z^2I_d),\mathcal N(z|E(x),\frac{\sigma_x^2}{L_2^2}I_d)).
\end{align}
Inserting the formula for the KL divergence between two normal distributions, we obtain
\begin{align}
\mathrm{KL}(\mathcal N(E(x),\sigma_z^2I_d),\mathcal N(z|E(x),\frac{\sigma_x^2}{L_2^2}I_d))&=\frac12\big(\mathrm{trace}(\tfrac{L_2^2\sigma_z^2}{\sigma_x^2} I_d)-d+d\log(\tfrac{\sigma_x^2}{L_2^2\sigma_z^2})\big)\\
&=\frac{d}2(\tfrac{L_2^2\sigma_z^2}{\sigma_x^2}-1-\log(\tfrac{L_2^2\sigma_z^2}{\sigma_x^2})),
\end{align}
which proves the claim.
\end{proof}

Combining the previous two lemmas, we obtain the following corollary.

\begin{corollary}\label{cor:Lipschitz_bound}
Assume that $E_1,...,E_K$ are of the form $E_k=A\circ T_k$, where $A\colon\R^n\to\R^d$ is given by the matrix $A=(I_d|0)$ and $T_k$ is invertible and bi-Lipschitz with Lipschitz constants $L_1$ and $L_2$ for $T$ and $T^{-1}$ and assume that $E_k\circ D_k=\mathrm{Id}$. Then
$$
\frac1L\beta_{ik}\leq\tilde\beta_{ik}\leq L\beta_{ik},
$$
where $\beta_{ik}$ and $\tilde \beta_{ik}$ are given in \eqref{eq_true_betas} and \eqref{eq_approx_betas} and $L=L_1^nL_2^n\exp(\tfrac{d}2(\tfrac{L_2^2\sigma_z^2}{\sigma_x^2}-1-\log(\tfrac{L_2^2\sigma_z^2}{\sigma_x^2})))$.
\end{corollary}

Note  in particular that $\tilde \beta_{ik}\to0$ whenever $\beta_{ik}\to 0$ and, by
$$
\tilde\beta_{ik}=1-\sum_{\substack{l=1\\l\neq k}}^K\tilde\beta_{ik}\geq1-L\sum_{\substack{l=1\\l\neq k}}^K\beta_{ik}=1-L(1-\beta_{ik})=1-L+L\beta_{ik}
$$
that $\tilde\beta_{ik}\to 1$ whenever $\beta_{ik}\to 1$.
However, the constant $L$ from the corollary depends exponentially on the dimension $n$, which might limit its applicability when $n$ is very large.

\vskip 0.2in
\bibliography{ref}

\end{document}